\documentclass{article}

\usepackage[preprint]{neurips_2026}

\usepackage[utf8]{inputenc} 
\usepackage[T1]{fontenc}    
\usepackage{hyperref}       
\usepackage{url}            
\usepackage{booktabs}       
\usepackage{amsfonts}       
\usepackage{nicefrac}       
\usepackage{microtype}      
\usepackage{xcolor}         

\usepackage{amsmath}
\usepackage{amssymb}
\usepackage{mathtools}
\usepackage{amsthm}
\usepackage{graphicx}
\usepackage{multirow}

\usepackage{algorithm}
\usepackage{algorithmicx}
\usepackage{algpseudocode}
\algnewcommand\algorithmicto{\textbf{to}}
\algrenewcommand\algorithmicrequire{\textbf{Input:}}
\algrenewcommand\algorithmicensure{\textbf{Output:}}
\algdef{SE}[SUBALG]{Indent}{EndIndent}{}{\algorithmicend\ }%
\algtext*{Indent}
\algtext*{EndIndent}

\usepackage{tikz}
\usetikzlibrary{arrows.meta,positioning,fit,shapes.geometric,calc}

\theoremstyle{plain}
\newtheorem{theorem}{Theorem}[section]
\newtheorem{proposition}[theorem]{Proposition}
\newtheorem{lemma}[theorem]{Lemma}

\theoremstyle{definition}

\newtheorem{condition}[theorem]{Condition}
\theoremstyle{remark}

\title{Reducing Credit-Assignment Variance Through Counterfactual Reasoning Paths}

\author{%
  Fei Ding \\
  Alibaba Group \\
  \texttt{dignfei@gmail.com} \\
  \And
  Yongkang Zhang \\
  Alibaba Group \\
  \And
  Youwei Wang \\
  Tsinghua University \\
  \And
  Zijian Zeng \\
  Tsinghua University
}

\begin{document}

\maketitle

\begin{abstract}
    Reinforcement learning for multi-step reasoning with large language models (LLMs) typically relies on sparse terminal rewards, which creates a poorly conditioned credit-assignment problem: the final feedback is propagated uniformly across all intermediate decisions. This leads to high gradient variance, unstable training, and many ineffective updates, ultimately limiting sustained model improvement. We propose a counterfactual-comparison framework for credit assignment. For each input, the framework samples multiple reasoning trajectories and treats their differences as implicit approximations to alternative decisions. This yields an implicit process-level advantage estimator that converts sparse terminal rewards into step-sensitive learning signals. Building on this framework, we introduce Implicit Behavior Policy Optimization (IBPO), which substantially improves training stability and the performance ceiling on mathematical and code-reasoning benchmarks. Our results point to a promising direction for unlocking the reasoning potential of LLMs.
\end{abstract}

\section{Introduction}

\begin{figure}[H]
    \centering
    \resizebox{\linewidth}{!}{
    \begin{tikzpicture}[
        font=\scriptsize,
        >=Latex,
        panel/.style={
            draw,
            rounded corners=3pt,
            line width=0.5pt,
            fill=gray!3,
            inner sep=4pt
        },
        title/.style={
            draw,
            rounded corners=2pt,
            fill=gray!16,
            align=center,
            minimum height=0.55cm,
            inner sep=3pt,
            font=\scriptsize\bfseries
        },
        flowstep/.style={
            draw,
            rounded corners=2pt,
            align=center,
            minimum height=0.46cm,
            minimum width=0.92cm,
            inner sep=2pt,
            fill=white
        },
        signal/.style={
            draw,
            rounded corners=2pt,
            align=center,
            minimum height=0.58cm,
            text width=1.35cm,
            inner sep=2pt,
            fill=orange!10
        },
        update/.style={
            draw,
            rounded corners=2pt,
            align=center,
            minimum height=0.58cm,
            text width=1.55cm,
            inner sep=2pt,
            fill=green!9
        },
        token/.style={
            draw,
            rounded corners=1pt,
            align=center,
            minimum height=0.28cm,
            minimum width=0.58cm,
            inner sep=1pt,
            fill=blue!5,
            font=\tiny
        },
        arr/.style={->, line width=0.45pt},
        softarr/.style={->, line width=0.45pt, blue!65},
        badarr/.style={->, dashed, line width=0.45pt, red!70}
    ]
        \node[panel, minimum width=7.2cm, minimum height=3.45cm] (leftpanel) at (3.6, 0) {};
        \node[panel, minimum width=8.2cm, minimum height=3.45cm] (rightpanel) at (11.65, 0) {};

        \node[title, minimum width=6.7cm] at (3.6, 1.45) {(A) Sequence-Level RL: Coarse Credit};
        \node[flowstep] (l1) at (1.0, 0.35) {Step 1};
        \node[flowstep] (l2) at (2.1, 0.35) {Step 2};
        \node[flowstep] (l3) at (3.2, 0.35) {Step 3};
        \node[flowstep] (l4) at (4.3, 0.35) {Step 4};
        \node[signal, text width=1.2cm] (lr) at (5.75, 0.35) {Terminal\\Reward};
        \node[signal, text width=1.6cm, fill=red!7] (lc) at (3.35, -1.0) {Same Sequence\\Advantage};
        \node[align=center, text width=2.1cm] at (5.75, -1.0) {High-variance\\updates\\Learning tax};
        \draw[arr] (l1) -- (l2);
        \draw[arr] (l2) -- (l3);
        \draw[arr] (l3) -- (l4);
        \draw[arr] (l4) -- (lr);
        \draw[badarr] (lr.south) |- (lc.east);
        \draw[badarr] (lc.north) -- (l2.south);
        \draw[badarr] (lc.north) -- (l3.south);
        \draw[badarr] (lc.north) -- (l4.south);

        \node[title, minimum width=7.7cm] at (11.65, 1.45) {(B) IBPO: Counterfactual Token-Level Signals};
        \node[align=right] (rlab) at (8.35, 0.82) {Reference\\traj. $\tau^r$};
        \node[token, fill=gray!8] (r1) at (9.45, 0.82) {};
        \node[token, fill=gray!8, right=0pt of r1] (r2) {};
        \node[token, fill=gray!8, right=0pt of r2] (r3) {};
        \node[align=right] (elab) at (8.35, -0.02) {Incorrect\\traj. $\tau^-$};
        \node[token, fill=red!7] (e1) at (9.45, -0.02) {$y_1$};
        \node[token, fill=red!7, right=0pt of e1] (e2) {$y_2$};
        \node[token, fill=red!7, right=0pt of e2] (e3) {$y_3$};
        \node[signal, text width=1.25cm] (ver) at (12.35, 0.4) {Terminal\\Reward\\$R(\tau)$};
        \node[signal, text width=1.75cm, fill=blue!8] (corr) at (14.3, 0.4) {Compare/Correct\\Generate $\tilde{\tau}$};
        \node[align=right] (clab) at (8.35, -0.88) {Corrected\\traj. $\tilde{\tau}$};
        \node[token, fill=green!9] (c1) at (9.45, -0.88) {$y'_1$};
        \node[token, fill=green!9, right=0pt of c1] (c2) {$y_2$};
        \node[token, fill=green!9, right=0pt of c2] (c3) {$y'_3$};
        \node[update, text width=1.08cm] (out) at (14.75, -0.88) {Token-Level\\Credit\\Signal};

        \draw[arr] (elab) -- (e1);
        \draw[arr] (rlab) -- (r1);
        \draw[arr] (clab) -- (c1);
        \draw[arr] (e3) -- (ver);
        \draw[arr] (r3) -- (ver);
        \draw[arr] (ver) -- (corr);
        \draw[arr] (corr.south west) to[out=-125,in=35] (c3.north east);
        \draw[arr] (c3.east) -- (out.west);
        \draw[softarr, dashed] (e1.south) to[out=-90,in=90] (c1.north);
        \draw[softarr, dashed] (e3.south) to[out=-90,in=90] (c3.north);
        \node[font=\tiny, blue!70] at (10.15, -1.18) {token differences};
    \end{tikzpicture}
    }
    \caption{Overview of IBPO. Standard sequence-level RL propagates sparse terminal rewards coarsely to the whole trajectory, which can produce unstable updates. IBPO first generates an incorrect trajectory and a reference trajectory and obtains terminal feedback from a verifier; it then generates a corrected third trajectory through comparison/correction. The training signal is induced by the terminal reward and the token-level differences between the incorrect and corrected trajectories, converting sparse terminal feedback into token-level credit signals.}
    \label{fig:ibpo_overview}
\end{figure}

Recent advances in large language models (LLMs) have produced substantial progress on complex multi-step reasoning tasks, especially after reinforcement-learning (RL) fine-tuning. RL has become a central paradigm for scaling LLM capabilities, enabling models to solve increasingly difficult problems, such as competition-level mathematics and program synthesis, through deeper and longer reasoning chains.

However, scaling RL for reasoning requires training to remain stable and sample-efficient under growing compute budgets. Despite this need, mainstream RL methods, such as Group Relative Policy Optimization (GRPO)~\cite{shao2024deepseekmath}, still optimize policies with sequence-level or trajectory-level rewards. This creates a fundamental mismatch between the learning signal and the inherently stepwise nature of reasoning.

In multi-step reasoning, correctness depends on a sequence of intermediate decisions. Sequence-level supervision, however, rewards an entire trajectory only according to the final answer: a trajectory with flawed reasoning can still receive a positive reward if it happens to end with the correct answer, whereas a largely correct trajectory with a single local mistake may be discarded wholesale. This paper focuses primarily on the latter class of failed but recoverable trajectories. They contain useful intermediate reasoning fragments, yet under sequence-level rewards they are treated entirely as negative samples because their final answers are wrong. Such coarse feedback weakens the model's ability to distinguish early from late mistakes, disrupts credit assignment, destabilizes learning, and limits exploration of alternative reasoning paths. The problem is especially pronounced in long-horizon or difficult tasks. Moreover, even a single local error may require extensive sampling and many updates before it is corrected statistically, creating a substantial efficiency bottleneck often referred to as a \emph{learning tax}.

In this work, we propose a counterfactual learning approach to credit assignment under sparse terminal rewards. Even without step-level supervision, differences among reasoning trajectories sampled for the same input naturally contain process-level information. Disagreements between these trajectories implicitly reveal how alternative intermediate decisions may lead to different outcomes. By systematically comparing these counterfactual paths and aligning their differences with final outcomes, we construct informative learning signals that are more sensitive to intermediate decisions.

Building on this insight, we introduce \emph{Implicit Behavior Policy Optimization (IBPO)}, a framework for process-level credit assignment induced by counterfactual trajectory comparison. IBPO defines a general multi-trajectory comparison operator and uses it to construct an implicit advantage estimator. The estimator reweights terminal rewards according to trajectory-level differences, improving the within-group structure of the pre-normalization advantage signal and amplifying learning signals at frequent decision errors. IBPO does not require step-level annotations, process reward models, or an additional value network. It can be integrated seamlessly with existing sequence-level RL optimizers while improving convergence stability and sample efficiency.

\paragraph{Contributions.} Our main contributions are as follows:

\begin{itemize}
    \item \textbf{Counterfactual modeling of credit assignment.} We introduce a counterfactual-learning perspective on credit assignment in LLM reinforcement learning, treating multiple reasoning trajectories for the same input as approximations to alternative decisions. We show that inconsistencies among these trajectories contain key information for process-level learning, even in the absence of step-level rewards.

    \item \textbf{Implicit process-level advantages and the IBPO framework.} We formalize a general multi-trajectory comparison operator and use it to construct an implicit process-level advantage estimator, yielding the IBPO framework.

    \item \textbf{Analysis of variance reduction and positive transfer.} We provide a conditional variance analysis showing that, when the counterfactual comparison signal complements terminal-reward noise and the shaping weight is moderate, IBPO can reduce the dispersion of the pre-normalization, within-group centered advantage terms. Because GSPO further applies within-group standard-deviation normalization, the full policy-gradient variance also depends on the policy-gradient terms themselves. We therefore analyze how this mechanism improves the structure of update signals and mitigates the learning tax.

    \item \textbf{Mechanism-driven empirical validation.} We evaluate IBPO on multiple mathematical and code-reasoning benchmarks. Under matched training compute, IBPO consistently improves convergence, sample efficiency, and early-error correction, outperforming strong baselines.
\end{itemize}

\section{Related Work}

\textbf{Group Relative Policy Optimization (GRPO).} GRPO~\cite{shao2024deepseekmath} is a recent reinforcement-learning algorithm for fine-tuning LLMs on reasoning tasks and has achieved strong results in systems such as DeepSeek-R1~\cite{guo2025deepseek}. GRPO estimates group-relative advantages from within-group sampling, replacing the explicit value modeling used in PPO and enabling faster, more efficient training. However, GRPO suffers from entropy collapse, reward collapse, and unstable convergence~\cite{yu2025dapo}. These issues largely stem from its reliance on the assumption that the \emph{final reward is sufficient to characterize the reasoning trajectory}. This assumption often fails in long-horizon reasoning, where success depends on a sequence of interdependent steps, leading to ill-posed credit assignment and inflated gradient variance. GSPO~\cite{zheng2025group} improves on GRPO by computing the importance ratio at the sequence level.

\textbf{Self-correction strategies.} Self-correction has emerged as a promising direction for strengthening reasoning. For example, selective reflection tuning~\cite{li-etal-2024-selective} enables models to reflectively evaluate multiple candidate responses and then fine-tune on the best response with supervised learning.

\textbf{Reward modeling.} Reward models are crucial for robust System-2 reasoning, but they are difficult to build. Recent directions include LLM-as-a-Judge frameworks~\cite{llmjudge,rstar}, outcome reward models~\cite{qwen2025qwen25technicalreport,yu-etal-2024-ovm}, and process reward models (PRMs) that provide step-level feedback for complex tasks~\cite{lightman2024lets,luo2024improve,mathshepherd}. PRMs, however, face important limitations: high annotation cost, weak generalization, and noisy signals from automated methods such as Monte Carlo sampling or MCTS~\cite{kang2024mindstar,wang2024qimprovingmultistepreasoning}. Human-annotated datasets such as PRM800k~\cite{lightman2024lets} are difficult to scale, and existing automatic annotation methods often produce noisy or inconsistent reward scores. In contrast, IBPO bypasses the need for fine-grained annotation through implicit comparison while still providing effective process-level supervision. Unlike prior methods, our approach does not assume that the reward decomposes into stepwise reward signals.

\textbf{Rubric rewards and relational counterfactual shaping.} Recent work has also explored using explicit rubrics as reward functions to provide denser feedback for tasks that lack easily verifiable answers~\cite{gunjal2026rubrics}. These methods are typically \emph{criterion-based single-output scoring}: given one candidate response, an external rubric or judge scores its quality according to predefined criteria. IBPO focuses on a complementary source of signal: \emph{relational counterfactual differences among multiple on-policy trajectories for the same input}. In other words, rubric methods densify the reward for a single output through external criteria, whereas IBPO constructs process-sensitive shaping signals by comparing differences between successful and failed trajectories under the same prompt. The two are not mutually exclusive: rubric rewards can serve as terminal or intermediate evaluators, while IBPO's comparison operator $\mathcal{M}$ can further extract cross-trajectory credit-assignment information on top of those rewards.

SCoRe~\cite{kumar2025training} iteratively reuses previously generated responses and prompts the model to identify errors in earlier outputs. It improves reasoning accuracy through multiple rounds of reinforcement learning, but its training pipeline relies on multi-stage generation and optimization cycles. IBPO also includes correction generation, but this generation is used only to construct shaping signals and auxiliary correction objectives within a single round of joint training. All additional generation, verification, and comparison costs are included in the training-compute budget.

\section{Method}

\begin{figure}[t]
    \centering
    \resizebox{\linewidth}{!}{
    \begin{tikzpicture}[
        font=\scriptsize,
        >=Latex,
        box/.style={
            draw,
            rounded corners=2pt,
            align=center,
            minimum height=0.78cm,
            text width=2.25cm,
            inner sep=3pt,
            fill=blue!5
        },
        signal/.style={
            draw,
            rounded corners=2pt,
            align=center,
            minimum height=0.78cm,
            text width=2.45cm,
            inner sep=3pt,
            fill=orange!9
        },
        update/.style={
            draw,
            rounded corners=2pt,
            align=center,
            minimum height=0.78cm,
            text width=3.0cm,
            inner sep=3pt,
            fill=green!8
        },
        arr/.style={->, line width=0.45pt},
        darr/.style={->, dashed, line width=0.45pt}
    ]
        \node[box] (input) at (0, 0) {Input\\$x$};
        \node[box] (sample) at (2.9, 0) {Sample $G$ trajectories\\for the same input\\$\{\tau_i\}_{i=1}^{G}$};
        \node[box] (verify) at (5.8, 0) {Terminal verifier\\$R(\tau_i)$\\correct/incorrect};
        \node[box] (pair) at (8.7, 0) {Counterfactual pairing\\incorrect trajectory $\tau_i$\\alternative reference $\tau^{\mathrm{ref}}$};
        \node[signal, text width=3.05cm, minimum height=0.96cm] (compare) at (11.95, 0) {Comparison operator\\$\mathcal{M}$\\correction/ranking/consistency};

        \node[signal] (guard) at (11.75, -1.65) {Rule verification and\\rewrite prevention\\terminal reward + edit distance\\$\phi_i,\ \mathbf{m}_i$};
        \node[signal] (shape) at (8.65, -3.25) {Path 1: reward shaping\\$R'_i=R_i+\lambda\phi_i$\\within-group advantage $\widehat{A'}_i$};
        \node[signal] (mask) at (14.85, -3.25) {Path 2: token mask\\$\mathbf{m}_i$\\selective token updates};
        \node[update] (optimizer) at (11.75, -4.9) {Sequence-level optimizer\\GSPO / GRPO\\jointly trained policy $\pi_\theta$};

        \draw[arr] (input) -- (sample);
        \draw[arr] (sample) -- (verify);
        \draw[arr] (verify) -- (pair);
        \draw[arr] (pair) -- (compare);
        \draw[arr] (compare) -- (guard);
        \draw[arr] (guard) -- (shape);
        \draw[arr] (guard) -- (mask);
        \draw[arr] (shape) -- (optimizer);
        \draw[arr] (mask) -- (optimizer);
        \draw[darr] (optimizer.west) to[out=180,in=-90] node[pos=0.55, below, align=center] {Resample\\after update} (sample.south);
    \end{tikzpicture}
    }
    \caption{IBPO method diagram. For the same input, the policy first samples multiple candidate trajectories, and a terminal verifier provides sequence-level feedback. Incorrect trajectories are paired counterfactually with alternative reference trajectories; when a correct trajectory is available, it is used as the reference, otherwise an incorrect trajectory from the same group is used. The comparison operator $\mathcal{M}$ produces process-sensitive signals, which are passed through rule verification and rewrite-prevention filters before being injected into a standard sequence-level optimizer through sequence-level reward shaping or token-level gradient masking.}
    \label{fig:ibpo_method}
\end{figure}

\subsection{Problem Formulation}

We consider multi-step reasoning reinforcement learning with terminal rewards. Given an input $x$, the policy $\pi_\theta$ generates a reasoning trajectory as a token sequence of length $T$:
\begin{equation}
    \tau = (y_1, y_2, \dots, y_T), \quad y_t \sim \pi_\theta(\cdot \mid x, y_{<t}),
\end{equation}
where $y_t$ denotes the token generated at step $t$.

The environment provides a \textbf{sequence-level reward} only when the trajectory is complete:
\begin{equation}
    R(\tau) \in [-1,1].
\end{equation}
In most reasoning tasks, $R(\tau)$ is sparse, for example the binary correctness of the final answer, and provides no explicit step-level supervision. To unify notation, the theoretical derivations in the main text use a symmetric binary reward $Y=R(\tau)\in\{-1,1\}$, whereas the experimental implementation and algorithm pseudocode use the correctness indicator $r(x,y)\in\{0,1\}$. The two are related by $Y=2r-1$, and all within-group normalized advantages preserve the same ranking structure under affine transformations.

The standard policy-gradient objective is:
\begin{equation}
    \nabla_\theta J(\theta)
    =
    \mathbb{E}_{\tau \sim \pi_\theta}
    \left[
    A(\tau)\sum_{t=1}^{T}
    \nabla_\theta \log \pi_\theta(y_t \mid x, y_{<t})
    \right].
\end{equation}

In multi-step reasoning tasks, the central challenge is not merely reward sparsity itself, but the fact that \textbf{terminal rewards assign credit to early decisions in an extremely unstable way}. When a local error occurs at an early step, its effect often cascades through and is amplified by later reasoning steps. Yet the error is reflected only indirectly through the terminal reward, producing highly noisy gradient signals whose variance can grow substantially with trajectory length.

\paragraph{IBPO as a framework rather than an implementation.}
We emphasize that IBPO is a \emph{training framework} for credit assignment under sparse terminal rewards, not a specific correction or rewriting algorithm. Its core contribution is to establish counterfactual trajectory comparison as a general mechanism for inducing implicit process-level learning signals. Concretely, the multi-trajectory comparison operator $\mathcal{M}$ in our framework is an abstract operator whose role is to extract cross-trajectory differences and generate learning signals that reflect differences in process-level decisions. The IBPO framework does not depend on implementation details such as how counterfactual differences are computed or what comparison mechanism is used. The operator $\mathcal{M}$ can be instantiated in many ways, such as consistency scoring, relative ranking, or error detection, but these are implementation details rather than constituent parts of the IBPO framework itself. Thus, IBPO's core contribution lies in its framework-level design, while concrete instantiations can be customized to task requirements.

Concrete comparison mechanisms such as correction, verifier-based ranking, or consistency scoring should be viewed as \emph{instantiations} of the comparison operator used within the IBPO framework. Our theoretical analysis and optimization framework apply to any instantiation that produces trajectory-dependent comparison signals sensitive to counterfactual differences.

\subsection{Counterfactual Trajectory Comparison}

\paragraph{Counterfactual trajectory comparison and the role of $\mathcal{M}$.}
The core idea of IBPO is to sample multiple reasoning trajectories for the same input and use their differences as counterfactual approximations for constructing process-sensitive learning signals. Specifically, we sample $G$ trajectories from the policy. Fully correct trajectories require no additional signal. For each incorrect target trajectory $\tau_i$, we pair it with $K\!-\!1$ alternative reference trajectories to form a target-centered $K$-tuple. When correct trajectories exist in the group, references are preferentially sampled from the correct trajectories; when there are too few correct trajectories, we replicate them to reach the required number. If no correct trajectory exists, we randomly select other incorrect trajectories as alternative references while excluding the target trajectory itself.
\begin{equation}
    \mathcal{T}_i =
    \left(\tau_i;\tau_{i,\mathrm{ref}}^{(1)},\ldots,\tau_{i,\mathrm{ref}}^{(K-1)}\right),
    \qquad K \ge 2.
\end{equation}

We introduce a \textbf{multi-trajectory comparison operator}
\begin{equation}
    \mathcal{M}(\mathcal{T}_i)=s_i
    \quad\text{or}\quad
    \mathcal{M}(\mathcal{T}_i)=(s_i,\mathbf{m}_i),
    \qquad
    s_i\in[0,1],\quad \mathbf{m}_i\in\{0,1\}^{T_i}.
\end{equation}
where $s_i$ is a comparison signal for the target trajectory $\tau_i$ and denotes its quality or recoverability relative to the reference trajectories, such as relative consistency, recoverability, or difference-aware quality. When a token-level variant is used, $\mathcal{M}$ also outputs a token mask $\mathbf{m}_i$ for the target trajectory $\tau_i$; otherwise only the scalar signal $s_i$ is used. The operator $\mathcal{M}$ can be implemented through various mechanisms and is subsequently verified by rule-based rewards to avoid circular reasoning and potential reward hacking or self-confirmation bias. The IBPO framework assumes only that the target-trajectory-dependent output of $\mathcal{M}$ is sensitive to the counterfactual differences between $\tau_i$ and its reference trajectories.

The output of $\mathcal{M}$ is not limited to sequence-level scalar signals. It can also produce token-level signals, such as the fraction of unchanged tokens used for reward shaping (IBPO-ratio), or a token-level mask used to block gradients on unchanged tokens (IBPO-mask). Detailed definitions of these variants are provided in Appendix~\ref{app:shaping_instance}.

\paragraph{Per-trajectory shaping function.}
Given the scalar comparison output $s_i$ from $\mathcal{M}(\mathcal{T}_i)$, we define the per-trajectory shaping function:
\begin{equation}
    \phi_i =
    \begin{cases}
        0 & \text{if } \tau_i \text{ is correct}, \\
        s_i \in [0, 1] & \text{otherwise}.
    \end{cases}
\end{equation}

$\phi$ is verified by rule-based rewards to avoid circular reasoning and potential reward hacking or self-confirmation bias. The function maps the comparison signal to a scalar shaping term for trajectory $\tau_i$. Importantly, $\phi_i$ depends on $\tau_i$ only through its relationship to other counterfactual trajectories, and it requires neither explicit step-level annotation nor value estimation.

\paragraph{Token-level masking.}
When the operator $\mathcal{M}$ can identify which tokens in a trajectory should and should not receive gradient updates, the comparison signal can be refined from the sequence level to a token-level mask. Let $\mathbf{m}_i = (m_{i,1}, \ldots, m_{i,T}) \in \{0, 1\}^T$ be the token-level mask produced by $\mathcal{M}$, where $m_{i,t}=1$ indicates that the $t$-th token should receive a gradient update and $m_{i,t}=0$ indicates that it should not be updated. The policy gradient can then selectively update only the specified tokens:
\begin{equation}
    \nabla_\theta \mathcal{J}_i^{\mathrm{mask}}
    =
    \frac{1}{M_i}
    \sum_{t=1}^{T} m_{i,t} \cdot \widehat{A'}_i \cdot \nabla_\theta \log \pi_\theta(y_t \mid y_{<t}, x),
    \qquad
    M_i=\max\!\left(1,\sum_{t=1}^T m_{i,t}\right).
    \label{eq:mask_grad}
\end{equation}
This masking mechanism concentrates gradient updates on potentially erroneous tokens and avoids unnecessary penalties on correct reasoning steps, thereby enabling finer token-level credit assignment. Concrete mask construction methods, such as edit-distance-based masking, are described in Appendix~\ref{app:shaping_instance}.

\subsection{Implicit Process-Level Advantage Estimation}

To inject comparison signals into optimization, we provide two complementary paths that replace the coarse feedback determined solely by $R(\tau)$.

\paragraph{Path 1: sequence-level reward shaping.}
For a candidate trajectory $\tau_i^{(k)}$, we define its shaped reward as:
\begin{equation}
    R'_i(x) = R(\tau_i) + \lambda \, \phi_i
    \label{eq:shaped_reward_general}
\end{equation}
where $0 \leq \lambda \phi_i < 1$, $\lambda \phi_i=0$ when $R(\tau_i)=1$, and $\lambda \phi_i$ is a positive value below 1 when $R(\tau_i)=-1$. Although $R'_i$ remains a sequence-level scalar, its value is conditioned on counterfactual comparisons among multiple trajectories and therefore statistically encodes process-level credit information.

We center $\lambda\phi_i$ through within-group advantage normalization:
\begin{equation}
    \widehat{A'}_i = \frac{R'_i(x) - \text{mean}\left( \{ R'_i(x) \}_{i=1}^G \right)}{\text{std}\left( \{ R'_i(x) \}_{i=1}^G \right)}.
\end{equation}
This normalization fixes the empirical scale of each within-group normalized advantage. Therefore, our theoretical analysis focuses on how the pre-normalization centered shaped rewards alter the within-group signal structure. The full post-normalization policy-gradient variance also depends on sequence-level log-probability gradients, trajectory length, and the correlation between the shaping signal and effective update directions.

\paragraph{Path 2: token-level gradient masking.}
When $\mathcal{M}$ produces a token-level mask $\mathbf{m}_i$ (as described above), the policy gradient can perform selective updates at token granularity (Eq.~\eqref{eq:mask_grad}). This path combines the sequence-level advantage $\widehat{A'}_i$ with a token-level mask: the advantage is still determined at the sequence level by counterfactual comparison, but gradients flow only through tokens marked for update. Neither path requires explicit step-level annotation or a value model.


\subsection{Mechanism: Positive Backward Transfer in Multi-Task Learning}
Counterfactual trajectory comparison does not directly provide explicit error labels to the model. Instead, by contrasting differences among multiple reasoning paths, latent errors become more salient during comparison. When this comparison behavior is jointly optimized with the base reasoning task, it induces \emph{positive backward transfer} from the auxiliary comparison task to the original reasoning task from the perspective of multi-task learning: the model learns to suppress local errors more quickly, thereby accelerating convergence on the base task. This mechanism reduces the number of ineffective updates needed to correct local errors and mitigates the \emph{learning tax} commonly observed in long-horizon reinforcement learning.

\paragraph{Testable prediction.}
This mechanism predicts especially large gains on difficult reasoning tasks, particularly when correct trajectories are extremely scarce and training signals are dominated by sparse terminal rewards. Concretely, we expect faster convergence and more stable training dynamics on challenging benchmarks. The process-level advantage-estimation mechanism and conditional variance analysis of IBPO are provided in Appendix~\ref{sec:variance_reduction}.


\section{Experiments}

\paragraph{Instantiating IBPO.}
As discussed above, IBPO provides a reward and advantage-construction framework based on multi-trajectory counterfactual comparison, rather than introducing a new sequence-level policy optimizer. Therefore, in concrete experiments, IBPO must be instantiated on top of an existing sequence-level reinforcement-learning method.

In this work, we instantiate IBPO on top of GSPO.
IBPO focuses on constructing process-sensitive advantage estimates from counterfactual multi-trajectory comparison under terminal rewards, while GSPO serves purely as the sequence-level optimizer that carries and applies these advantage signals. As shown in Appendix~\ref{sec:ibpospo1}, we provide a detailed IBPO+GSPO framework. We also observed similar trends on GRPO; for brevity, we omit those results. The concrete instantiation of the comparison operator $\mathcal{M}$ is described in Appendix~\ref{app:instantiation}.

\textbf{Tasks and datasets.}
We evaluate the proposed method on a collection of mathematical and code-reasoning benchmarks. The selected tasks are designed to assess symbolic manipulation, multi-step reasoning, domain-specific mathematical understanding, and code reasoning.

\textit{HMMT25}~\cite{NEURIPS2025_1d27c01e},
\textit{AIME25}~\cite{maa_aime2025} and
\textit{LiveCodeBench v6 (25.02-25.05)}~\cite{jain2025livecodebench}
\textbf{Base models.} Qwen3-32B~\cite{qwen3technicalreport}. Qwen3-Next-80B-A3B-Thinking~\cite{qwen3next}.

We configure Qwen3-32B with a 32k-token context length and Qwen3-Next-80B-A3B-Thinking with a 256k-token context length. Inference is performed with the VLLM engine (version 0.11.2).

\textbf{Baselines and comparison setup.}
We compare IBPO instantiated on GSPO~\cite{zheng2025group} (denoted \textbf{IBPO+GSPO}) with the following baselines: (1) vanilla GSPO; (2) \textbf{GSPO with prompt}, which introduces an additional correction prompt only at inference time after GSPO training to generate a revised output; (3) \textbf{GSPO with SCoRe}, which uses a SCoRe-style two-stage self-correction pipeline: it first generates an initial answer, then constructs a correction input from that answer and optimizes the corrected output. Its additional generation and optimization costs are counted under the same training-compute budget; and (4) \textbf{GSPO with Best-of-N}, which samples $N=8$ candidate answers for the same input and uses the same terminal verifier as the main experiments to select a candidate that passes verification. If multiple candidates pass, the first passing candidate is selected; if none pass, the first sample is retained. This baseline does not use oracle step-level labels or majority voting.

Experiments are conducted on 32 Nvidia A800 (80G) GPUs. Training hyperparameters are as follows: initial learning rate $5\times10^{-7}$; cosine-annealing learning-rate schedule with minimum learning-rate ratio 0.1; linear warmup for 3\% of the total training steps; entropy-regularization coefficient $\beta=0$; 64 rollouts per input for GSPO and 8 rollouts per input for IBPO+GSPO; and mini-batch size 32.

\paragraph{Compute-matching protocol.} We align the budgets of different methods by estimated training compute based on GPU runtime and GPU utilization. Details are provided in Appendix~\ref{sec:appendix-compute}.

\paragraph{Full-rewrite filtering.} To prevent the model from completely rewriting rather than locally repairing a solution during correction, which would lead to reward hacking, we detect full rewrites using edit distance and set their shaping rewards to zero. This mechanism is a defensive safeguard rather than a core component; see Appendix~\ref{app:rewrite_detection}.



Our experiments focus on mathematical and code-reasoning tasks. The applicability of the IBPO framework primarily depends on whether verifiable terminal feedback is available during training; further discussion is provided in Appendix~\ref{sec:appendix-scope}.

\section{Results and Analysis}

\begin{table}[t]
    \centering
    \renewcommand{\arraystretch}{1.1}
    \resizebox{\textwidth}{!}{
        \begin{tabular}{lcccccc}
            \toprule
            \multirow{2}{*}{\textbf{Method}} &
            \multicolumn{3}{c}{\textbf{Qwen3-32B}} &
            \multicolumn{3}{c}{\textbf{Qwen3-Next}} \\
            \cmidrule(lr){2-4}\cmidrule(lr){5-7}
            & \textbf{AIME25} & \textbf{LiveCodeBench} & \textbf{HMMT25}
            & \textbf{AIME25} & \textbf{LiveCodeBench} & \textbf{HMMT25} \\
            \midrule
            GSPO + IBPO
            & $85.3\pm1.2$ & $75.3\pm1.1$ & $62.6\pm1.4$
            & $93.8\pm1.5$ & $75.3\pm1.3$ & $80.4\pm1.6$ \\
            GSPO + IBPO(ratio)
            & $85.0\pm1.3$ & $74.9\pm1.2$ & $62.2\pm1.5$
            & $93.4\pm1.4$ & $74.7\pm1.4$ & $80.0\pm1.5$ \\
            GSPO + IBPO(mask)
            & \textbf{$85.9\pm1.1$} & \textbf{$76.0\pm1.0$} & \textbf{$63.1\pm1.3$}
            & \textbf{$94.4\pm1.3$} & \textbf{$75.8\pm1.2$} & \textbf{$81.1\pm1.4$} \\
            \midrule
            GSPO
            & $77.1\pm1.4$ & $64.6\pm1.5$ & $55.6\pm1.3$
            & $90.1\pm1.1$ & $70.9\pm1.7$ & $75.9\pm1.4$ \\
            GSPO with prompt
            & $78.2\pm0.8$ & $65.3\pm0.9$ & $56.4\pm1.6$
            & $90.6\pm1.2$ & $71.4\pm1.3$ & $76.5\pm1.8$ \\
            GSPO with SCoRe
            & $78.3\pm1.2$ & $65.2\pm1.2$ & $56.7\pm0.8$
            & $90.7\pm1.5$ & $71.9\pm1.2$ & $76.3\pm0.9$ \\
            GSPO with Best-of-N
            & $77.9\pm0.9$ & $66.1\pm1.7$ & $57.1\pm0.9$
            & $91.5\pm0.9$ & $71.6\pm1.5$ & $77.2\pm1.4$ \\
            \bottomrule
        \end{tabular}
    }
    \caption{Experimental results with Qwen3-32B and Qwen3-Next-80B-A3B-Thinking. For each test set, we conduct 64 independent pass@1 evaluations and report average accuracy; each evaluation samples one final answer per problem and uses neither majority vote nor pass@k. We report means over 5 random seeds with 95\% bootstrap confidence intervals (mean $\pm$ 95\% CI). The main method, GSPO+IBPO, is statistically significant against all baselines under a normal-approximation test ($p<0.01$). All methods are budget-aligned by the same training compute, and IBPO's base generation, correction generation, verification, and comparison costs are all included in the total budget. The Best-of-N method uses $N=8$.
    }
    \label{tab:scaled_result_qwen3}
\end{table}

Table~\ref{tab:scaled_result_qwen3} reports performance comparisons across multiple reasoning benchmarks. In addition to the base IBPO variant with sequence-level reward shaping, we evaluate two token-level edit-distance-based counterfactual analysis variants, IBPO(ratio) and IBPO(mask), whose detailed definitions appear in Appendix~\ref{app:shaping_instance}. Under matched training compute, GSPO+IBPO consistently outperforms GSPO, test-time prompt correction, SCoRe-style self-correction, and Best-of-N baselines across both base models and all three datasets; IBPO(mask) achieves the highest performance. Compared with GSPO, GSPO+IBPO improves Qwen3-32B by 8.2, 10.7, and 7.0 percentage points, and Qwen3-Next by 3.7, 4.4, and 4.5 percentage points.

These results show that the process-sensitive signals produced by counterfactual trajectory comparison improve the sample efficiency and stability of sequence-level RL under the same training-compute budget. Figure~\ref{fig:quxiaotu} further compares the training dynamics. The main experiments in Table~\ref{tab:scaled_result_qwen3} use a unified training-compute matching protocol: to avoid treating two-stage generation with different context lengths as having the same cost, we do not use the number of rollouts as the matching criterion. GSPO samples 64 base responses per input prompt, whereas IBPO samples 8 base responses per prompt and performs comparison-correction generation for incorrect responses. The two methods are aligned by unified training compute, and IBPO's base generation, correction generation, verification, rewrite filtering, and parameter-update costs are all included in the total budget. The training reward always refers to the batch average of the original terminal reward $r(x,y)$ rather than the shaped reward $r'(x,y)$; therefore, the reward curves for GSPO and IBPO in Fig.~\ref{fig:quxiaotu} use the same measurement.

\begin{figure}[t]
    \centering
    \includegraphics[width=0.9\linewidth]{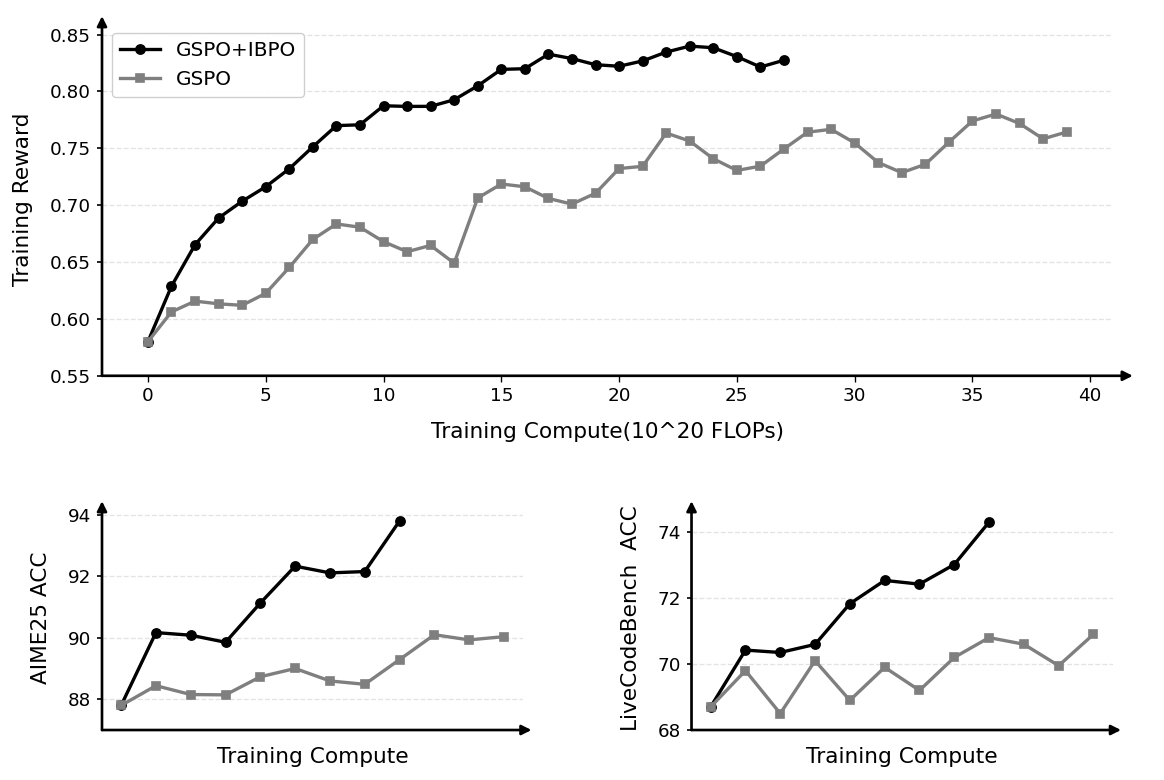}
    \caption{Training curves for fine-tuning Qwen3-Next-80B-A3B-Thinking. The horizontal axis is normalized estimated training compute; training reward is the batch average of the original terminal reward $r(x,y)$.}
    \label{fig:quxiaotu}
\end{figure}

\begin{table}[t]
    \centering
    \renewcommand{\arraystretch}{1.1}
    \begin{tabular}{l c}
    \toprule
    \textbf{Method} & \textbf{Compute@Reward=0.75} \\
    \midrule
    GSPO            & 1.00$\times$ \\
    GSPO + IBPO     & \textbf{0.63$\times$} \\
    \bottomrule
\end{tabular}
\caption{With Qwen3-Next-80B-A3B-Thinking, we measure the training budget required to reach a fixed training-reward threshold under a \textbf{training-compute matched} setting. Results are normalized relative to GSPO.}
\label{tab:compute_threshold}
\end{table}

Under budgets aligned by training compute, GSPO+IBPO consistently achieves higher training rewards throughout training and enters the high-reward regime earlier under the same compute constraint. Table~\ref{tab:compute_threshold} shows that GSPO+IBPO requires only 0.63$\times$ the training compute of GSPO to reach a fixed training-reward threshold of 0.75. This indicates that IBPO converts terminal rewards into more effective parameter updates \emph{per unit of compute}, improving training-compute efficiency.

Beyond average performance, GSPO+IBPO also exhibits a smoother training-reward trajectory with smaller fluctuations than GSPO. With only sequence-level terminal rewards, local reasoning errors are often propagated uniformly through the entire generated sequence via shared terminal feedback, causing gradient estimates to be affected by noise from irrelevant time steps. The shaping signals produced by counterfactual trajectory comparison help the model identify error-prone regions more quickly and suppress ineffective updates during optimization. The evolution of correction success rate, the positive-backward-transfer explanation, and a detailed comparison with test-time prompt correction are provided in Appendix~\ref{sec:appendix-training-dynamics}.

Ablation experiments further show that multi-trajectory counterfactual comparison, joint training, and the shaping weight all contribute to final performance. Full results and the $\lambda$ sensitivity analysis are given in Appendix~\ref{sec:appendix-ablation}.

\section{Conclusion}

We introduced Implicit Behavior Policy Optimization (IBPO), a reinforcement-learning paradigm that extracts implicit process-level learning signals from sparse terminal rewards through counterfactual comparison of reasoning trajectories. By sampling multiple trajectories per input and comparing their outcomes, IBPO enables more stable credit assignment without step-level supervision or an auxiliary value model.

Experiments show that, when combined with sequence-level optimizers such as GSPO, IBPO consistently improves performance and training stability on mathematical and code-reasoning benchmarks under matched training compute. Its formulation is agnostic to the underlying reinforcement-learning algorithm and is directly compatible with GRPO variants and other policy-gradient methods, offering a scalable path toward more robust multi-step reasoning in LLMs.

\section*{Limitations}

IBPO incurs additional compute overhead because it samples multiple counterfactual trajectories for each input. Although our results show higher sample efficiency under a fixed budget, reducing this overhead remains important for large-scale deployment. In addition, if the counterfactual trajectories contain systematic errors, the comparison signal may weaken. Increasing trajectory diversity or incorporating external verifiers can mitigate this issue.

Our experiments focus primarily on mathematics and code because these tasks have relatively reliable terminal verifiers. For open-ended question answering, writing, dialogue, and other settings that require LLM-as-a-Judge or rubric-model scoring, verifier noise directly affects recoverability judgments and shaping rewards, potentially amplifying erroneous preferences or weakening counterfactual comparison signals. Practical applications can reduce bias from noisy verifiers by using multi-judge consistency, confidence weighting, calibration-set filtering, or enabling shaping rewards only for high-confidence samples.

\section*{Societal Impact}

IBPO aims to improve the sample efficiency and training stability of reinforcement learning for multi-step reasoning. Potential positive impacts include lowering training-compute costs, improving the reliability of mathematical and code-reasoning systems, and reducing dependence on human step-level annotation. At the same time, stronger reasoning and code-generation capabilities could be used for unreliable automation, for generating incorrect but plausible explanations, or for assisting harmful code writing. Because the method depends on terminal verifiers, deployment to open-ended tasks may also inherit or amplify judge bias. Practical use should be paired with task risk assessment, verifier calibration, safety filtering, and human review.

\newpage
\appendix

\section{Scope of Applicability}
\label{sec:appendix-scope}

Although our experiments focus primarily on mathematical and code-reasoning tasks, the design and applicability of IBPO are not limited to a particular task domain. Its core mechanism is not multi-draft generation or explicit correction per se. Rather, IBPO constructs multiple counterfactual reasoning trajectories for the same input and uses their differences in terminal outcomes and intermediate decisions to induce learning signals that are more sensitive to the reasoning process. From this perspective, IBPO is fundamentally a training paradigm based on counterfactual trajectory comparison. Its applicability depends only on whether some objective or verifiable feedback signal is available during training, rather than on a particular task format or output structure.

In mathematics and programming tasks, the correctness of the final solution has a clear and automatically verifiable definition, making these tasks convenient and reliable testbeds for studying the role of counterfactual trajectory comparison in multi-step reasoning RL. For other task types, such as factual question answering, structured knowledge reasoning, or multi-step decision problems with clear termination conditions, one can likewise design appropriate verifiable reward functions or evaluation criteria to distinguish the terminal quality of different counterfactual trajectories. By comparing multiple counterfactual trajectories sampled for the same input during training and injecting the resulting differences into reward shaping or advantage estimation, the model can statistically learn which intermediate decisions are more likely to lead to success and which are more likely to lead to failure.

Structurally, the key advantage of IBPO lies in its modeling of counterfactual trajectory independence and the resulting implicit process-level credit-assignment mechanism. This mechanism does not rely on explicit step-level annotations or an additional value model. Instead, with only terminal rewards, it introduces more discriminative learning signals for the reasoning process through multi-trajectory counterfactual comparison. Therefore, although we do not yet provide empirical results on non-mathematical or non-code tasks, the principles of counterfactual trajectory comparison and process-level advantage construction underlying IBPO are, in principle, applicable to any reasoning or decision task with verifiable terminal outcomes.

\section{Existing Assets and Licenses}
\label{sec:appendix-assets}

The existing models, datasets, and evaluation tools used in this paper are all taken from public release pages. Based on the official model cards, dataset cards, or code repositories, we record the following license information:

\begin{itemize}
    \item \textbf{Qwen3-32B}: base model. The official Hugging Face page is \url{https://huggingface.co/Qwen/Qwen3-32B}, and the license is marked as Apache-2.0.
    \item \textbf{Qwen3-Next-80B-A3B-Thinking}: base model. The official Hugging Face page is \url{https://huggingface.co/Qwen/Qwen3-Next-80B-A3B-Thinking}, and the license is marked as Apache-2.0. The official Qwen repository also states that its open-weight models use the Apache 2.0 license.
    \item \textbf{MathArena/aime\_2025}: AIME 2025 evaluation data. The official Hugging Face dataset page is \url{https://huggingface.co/datasets/MathArena/aime_2025}, and the license is marked as CC BY-NC-SA 4.0. The AoPS AIME 2025 page also states that the original problems are copyrighted by the Mathematical Association of America.
    \item \textbf{MathArena/hmmt\_feb\_2025}: HMMT February 2025 evaluation data. The official Hugging Face dataset page is \url{https://huggingface.co/datasets/MathArena/hmmt_feb_2025}, and the license is marked as CC BY-NC-SA 4.0.
    \item \textbf{LiveCodeBench}: code-evaluation benchmark. The official GitHub repository is \url{https://github.com/LiveCodeBench/LiveCodeBench}, and the code license is MIT.
    \item \textbf{vLLM}: inference engine. The official GitHub repository is \url{https://github.com/vllm-project/vllm}, and the license is Apache-2.0.
\end{itemize}

Beyond the public assets listed above, this paper does not release a new dataset or benchmark. If training code or checkpoints are released later, the corresponding licenses, terms of use, and reproduction instructions will be added to the release materials.

\section{Additional Ablation Experiments}
\label{sec:appendix-ablation}

To evaluate the contribution of each IBPO component, we conduct ablation experiments with Qwen3-32B on AIME25, LiveCodeBench, and HMMT25. Table~\ref{tab:ablation} summarizes the results.

\begin{table}[H]
    \centering
    \renewcommand{\arraystretch}{1.1}
    \resizebox{0.5\textwidth}{!}{
        \begin{tabular}{lcccc}
            \toprule
            \textbf{Model Variant} & \textbf{AIME25 (\%)} & \textbf{LiveCodeBench (\%)} & \textbf{HMMT25 (\%)} &  \\
            \textbf{Full IBPO($K=2$)}    & \textbf{85.3} & \textbf{75.3} & \textbf{62.6} &  \\
            \hline
            GSPO (baseline)               & 77.1 & 64.6 & 55.6  \\
            GSPO + test-time prompt       & 78.2 & 65.3 & 56.4  \\
            IBPO($K=1$)                   & 78.6 & 65.9 & 57.1  \\
            IBPO (shaping only)           & 80.3 & 70.2 & 59.7  \\
            \bottomrule
        \end{tabular}
    }
    \caption{Ablation results for Qwen3-32B on AIME25, LiveCodeBench, and HMMT25. Each variant removes one key component from the full IBPO algorithm.}
    \vspace{6pt}\label{tab:ablation}
\end{table}

\paragraph{GSPO + test-time prompt.}
Multi-trajectory comparison is performed only at inference time through prompting. Without joint training, the gains introduced by IBPO disappear, leading to a substantial performance drop. This result validates the \emph{positive transfer} induced by IBPO and the effectiveness of implicit process-level rewards.

\paragraph{IBPO ($K=1$).}
This variant uses only the target answer itself to construct the correction input and provides no alternative reference from other candidate trajectories under the same prompt. It therefore degenerates into a self-correction-only training setup: the model can still attempt to revise its own output, but it cannot use cross-trajectory differences to localize local errors or construct relational counterfactual signals. Accuracy drops substantially without comparison among multiple counterfactual trajectories, indicating that inconsistencies among such trajectories play a key role in error identification.

\paragraph{IBPO (shaping only).}
In this ablation, we disable joint training for multi-trajectory comparison while retaining the reward-shaping term. This isolates the positive-transfer effect brought by joint training.

\begin{table}[h!]
    \centering
    \resizebox{\columnwidth}{!}{%
        \begin{tabular}{cccccc}
            \hline
            \textbf{$\lambda$} & \textbf{0.4} & \textbf{0.6} & \textbf{0.8} & \textbf{1.0} & \textbf{1.2} \\ \hline
            Accuracy (\%)      & 83.1         & \textbf{85.3} & 83.6         & 82.1         & 81.5         \\ \hline
        \end{tabular}%
    }
    \caption{Evaluation results under different $\lambda$ values (IBPO score; Qwen3-32B; AIME25).}
    \label{tab:lambda_sensitivity}
\end{table}

We conduct a sensitivity analysis over $\lambda$ and observe the best performance around 0.6. Overall, the ablation results clearly show that each IBPO component makes a meaningful contribution to the final performance.

\section{Training Dynamics and Additional Result Analysis}
\label{sec:appendix-training-dynamics}

\paragraph{Evolution of correction success rate.}
The informativeness of the shaping signal depends on whether the correction success rate remains in a meaningful intermediate range: if correction almost always succeeds or almost always fails, the signal degenerates into a constant. We track the evolution of correction success rate during Qwen3-32B training on AIME25. The correction success rate is about 12\% early in training and gradually rises to about 67\% as training progresses. This indicates that the shaping signal remains in an information-rich regime throughout training, neither constantly zero nor constantly one, and thus continues to provide discriminative process-level feedback for policy optimization.

\subsection{Positive Backward Transfer and Difficult Tasks}
As shown in Fig.~\ref{fig:quxiaotu}, introducing IBPO leads to faster performance improvement and faster convergence. We attribute this to a \emph{positive backward transfer} effect. In multi-task learning, positive backward transfer refers to the phenomenon in which learning a later task (task~B) improves performance on an earlier task (task~A), reflecting strong generalization. By introducing an auxiliary task based on counterfactual comparison of reasoning trajectories, IBPO induces a significant positive transfer effect on the main reasoning task during training.

Concretely, under GSPO training the model receives only a sequence-level reward $R(y)$, which is propagated uniformly across the entire reasoning sequence. When final failure is caused by only a few local tokens, this supervision signal cannot indicate where the error occurred, forcing the model to rely on extensive sampling and iterative updates to gradually internalize these local errors statistically. This substantially increases the sample complexity of learning and gives rise to the \emph{learning tax} commonly observed in reinforcement learning. IBPO introduces an auxiliary task based on counterfactual comparison of reasoning trajectories. By contrasting inconsistencies across different reasoning paths, it increases the \emph{observability} of local errors and accelerates the model's internalization of fine-grained reasoning mistakes.

In difficult reasoning tasks, correct responses are often extremely rare, and the vast majority of model-generated trajectories are incorrect. This distribution makes sequence-level rewards highly sparse and causes policy-gradient estimates to be dominated by negative samples, slowing convergence and potentially destabilizing training. In contrast, without changing the original sampling distribution, IBPO converts a small number of correct reasoning paths into multiple informative learning signals through counterfactual trajectory comparison, thereby alleviating the learning bottleneck caused by positive-sample scarcity in difficult tasks.

\subsection{Comparison with Prompted GSPO}

The core difference between GSPO+prompt and IBPO+GSPO is whether joint training is performed. The former conducts multi-trajectory comparison and correction through prompting only at inference time after training is complete, whereas the latter performs multi-trajectory comparison during training and jointly optimizes the model. The experimental results validate the effectiveness of implicit process-level rewards. Overall, these results show that IBPO not only improves overall accuracy but also substantially strengthens model robustness across problems of varying difficulty.

\section{Theoretical Analysis: Conditional Variance-Reduction Properties of IBPO}
\label{sec:variance_reduction}

To characterize the role of IBPO in credit assignment, we use representative GSPO-style methods as a baseline and analyze how counterfactual comparison signals affect the variance of within-group centered reward terms. The analysis is conditional: it does not claim that IBPO unconditionally reduces the full policy-gradient variance in all settings, nor that the sample-advantage variance after standard-deviation normalization is further reduced. Instead, it shows that when the comparison signal complements terminal-reward noise and the shaping weight is moderate, the pre-normalization advantage terms constructed by IBPO can have lower within-group dispersion.

\subsection{Process-Level Advantage-Estimation Mechanism}

\begin{proposition}[IBPO's process-level advantage-estimation mechanism]
    For any multi-step reasoning task, suppose the policy $\pi_\theta$ generates $G$ trajectories $\{\tau_i\}_{i=1}^G$ for a given input $x$, and each trajectory $\tau_i$ receives a sequence-level reward $R(\tau_i)$. For each incorrect target trajectory $\tau_i$, a target-trajectory-dependent comparison signal $s_i$ is generated by counterfactual comparison with $K-1$ alternative reference trajectories; these reference trajectories are correct trajectories when available and otherwise other incorrect trajectories in the group. The comparison signal is injected into optimization through two paths: (i) it is mapped to a shaped reward $R'_i(x)$; and (ii) when $\mathcal{M}$ produces a token-level mask $\mathbf{m}_i$, only the marked tokens are selectively updated.

    When the shaping function $\phi_i$ provides informative process signals for suboptimal trajectories and this signal complements terminal-reward noise, IBPO's advantage estimate can reduce the within-group dispersion of pre-normalization centered reward terms and improve the update-signal structure induced by coarse episode-level rewards. Specifically, in Path 1, $\phi_i$ provides finer process feedback through reward shaping; in Path 2, the token-level mask further concentrates gradient updates on potentially erroneous tokens and avoids unnecessary penalties on correct reasoning steps. These two paths provide a mechanistic explanation for more stable training dynamics.

\end{proposition}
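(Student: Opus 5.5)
The statement is qualitative, so my first step is to fix a precise conditional model and then prove a variance inequality for Path~1. Fix an input $x$. Treat the $G$ trajectories as exchangeable, with $Y_i=R(\tau_i)\in\{-1,1\}$ and shaped reward $R'_i=Y_i+\lambda\phi_i$, where $\phi_i=0$ when $Y_i=1$ and $\phi_i=s_i\in[0,1]$ otherwise. I will model the ``ideal'' process-level signal as a latent per-trajectory quality $Q_i$, for example the probability that the trajectory's prefix leads to success. I will write $Y_i=Q_i+\varepsilon_i$, where $\varepsilon_i$ is terminal-reward noise with $\mathbb{E}[\varepsilon_i\mid Q_i]=0$.

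The phrase ``complements terminal-reward noise'' I formalize as the correlation condition $\mathrm{Cov}(\phi_i,\varepsilon_i)<0$. In words: among incorrect trajectories, recoverable ones (those with large $s_i$) are exactly those whose terminal reward understated their quality.

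The target quantity is the within-group dispersion of the centered terms, $\mathbb{E}\big[\tfrac{1}{G}\sum_i (R'_i-\bar R')^2\big]=\tfrac{G-1}{G}\mathrm{Var}(R'_i)$ under exchangeability. I measure it as deviation from the latent signal, i.e. I compare $\mathrm{Var}(R'_i-Q_i)$ against $\mathrm{Var}(Y_i-Q_i)$. Reading ``dispersion'' as noise around the process signal is necessary here, because the raw variance of $R'_i$ can increase trivially.

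The key computation is then a one-line quadratic expansion:
\[
\mathrm{Var}(\varepsilon_i+\lambda\phi_i)=\mathrm{Var}(\varepsilon_i)+2\lambda\,\mathrm{Cov}(\varepsilon_i,\phi_i)+\lambda^2\mathrm{Var}(\phi_i).
\]
This is strictly smaller than $\mathrm{Var}(\varepsilon_i)$ exactly when
\[
0<\lambda<\lambda^\star:=\frac{-2\,\mathrm{Cov}(\varepsilon_i,\phi_i)}{\mathrm{Var}(\phi_i)}.
\]
The reduction is maximized at $\lambda^\star/2$, where it equals $\mathrm{Cov}(\varepsilon_i,\phi_i)^2/\mathrm{Var}(\phi_i)$. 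This gives the ``moderate shaping weight'' clause. I will also check compatibility with the constraint $0\le\lambda\phi_i<1$, which keeps every incorrect trajectory ranked below every correct one, so the ordering by $Y$ is preserved. Within-group centering subtracts the same $\bar R'$ from every term, so the result transfers to the centered terms up to the factor $(G-1)/G$ plus a cross-term that vanishes under exchangeability. I will state explicitly that nothing is claimed after std-normalization, consistent with the section's caveat.

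For Path~2, I will argue at the level of the per-token gradient sum in Eq.~\eqref{eq:mask_grad}. Decompose the score sum into tokens with $m_{i,t}=1$ and tokens with $m_{i,t}=0$. If the unmasked (``correct-step'') tokens carry score terms that are conditionally mean-zero given $\widehat{A'}_i$ and uncorrelated with the masked ones, then dropping them removes a nonnegative variance contribution. Under that condition it leaves the conditional mean of the update restricted to the erroneous tokens unchanged. The $1/M_i$ normalization only rescales, and I will note that rescaling changes the step size, not the signal-to-noise structure.

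The main obstacle is the modeling step: making ``informative'' and ``complements noise'' precise enough to prove something, yet weak enough to remain plausible for an instantiated $\mathcal{M}$. The whole result reduces to the sign of $\mathrm{Cov}(\varepsilon_i,\phi_i)$, which is an assumption rather than something derivable. If no such assumption is imposed, the conclusion can fail, for example when $\phi_i$ is pure noise. I would therefore present the proposition as this conditional statement, with the explicit threshold $\lambda^\star$ as its quantitative content.
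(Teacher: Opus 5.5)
\textbf{Main gap: you bound a different quantity.} You prove an inequality about a different quantity than the one the statement names, and your reason for switching is mistaken. The statement concerns the within-group dispersion of the pre-normalization centered rewards themselves: $A^{\mathrm{IBPO}}_i=R'_i-\bar R'$ against $A^{\mathrm{GSPO}}_i=Y_i-\bar Y$. You replace this with $\mathrm{Var}(R'_i-Q_i)$ against $\mathrm{Var}(Y_i-Q_i)$ for an unobserved latent quality $Q_i$, arguing that the raw variance ``can increase trivially.''

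For moderate $\lambda$ it does not. The shaping term vanishes on correct trajectories and is positive in expectation on incorrect ones, so by construction alone $\mathrm{Cov}(Y_i,\phi_i)=-2p(1-p)\,\mathbb{E}[\phi_i\mid Y_i=-1]<0$ whenever $0<p<1$ (Lemma~\ref{lem:design_negative_corr}). Writing $A^{\mathrm{IBPO}}_i=A^{\mathrm{GSPO}}_i+\lambda(\phi_i-\bar\phi)$ and expanding gives $\mathrm{Var}(A^{\mathrm{IBPO}}_i)=\mathrm{Var}(A^{\mathrm{GSPO}}_i)+(1-\frac1G)(\lambda^2V_\phi-2\lambda C)$, which is strictly smaller for $0<\lambda<2C/V_\phi$.

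That is the paper's route. Its proof of this proposition only describes the two paths qualitatively; the quantitative content is Proposition~\ref{prop:conditional_variance_reduction} under Condition~\ref{cond:negative_corr}. The raw dispersion grows only beyond that threshold, or when the negative correlation fails (e.g.\ $p=0$, where $\mathrm{Var}(Y_i)=0$ and any non-constant $\phi_i$ adds spread). Those are exactly the cases the hypotheses exclude. So $Q_i$, $\varepsilon_i$ and the unverifiable sign assumption on $\mathrm{Cov}(\varepsilon_i,\phi_i)$ are unnecessary. Your noise-around-signal criterion may be a more meaningful notion of variance reduction, but it is not what is claimed.

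\textbf{Second gap: the transfer to centered terms.} Exchangeability makes the pairwise cross-covariances equal, not zero: $\mathbb{E}\big[\frac1G\sum_i(X_i-\bar X)^2\big]=(1-\frac1G)\big(\mathrm{Var}(X_i)-\mathrm{Cov}(X_i,X_j)\big)$. Your identity with $\frac{G-1}{G}\mathrm{Var}(R'_i)$ therefore silently assumes uncorrelated samples.

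In IBPO the cross terms are real. $\phi_i$ depends on the rest of the group through reference selection (whether a correct reference exists, and which one is drawn), so $\mathrm{Cov}(Y_i,\phi_j)$ and $\mathrm{Cov}(\phi_i,\phi_j)$ need not vanish. The paper carries them as $C_{\mathrm{out}}$ and $V_{\mathrm{out}}$ and states its hypotheses as $C_{\mathrm{in}}-C_{\mathrm{out}}<0$ and $V_{\mathrm{in}}-V_{\mathrm{out}}>0$. Any threshold you derive must involve the corresponding differences, including the cross-covariances of $\varepsilon$.

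\textbf{Path 2.} Your argument goes beyond the paper, which treats the mask only qualitatively, and it is acceptable as a conditional remark. Two corrections: $1/M_i$ differs across trajectories, so it reweights them relative to one another rather than merely changing the step size. Also, in the paper's convention the tokens with $m_{i,t}=0$ are the ones excluded from the update.
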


\begin{proof}
    By introducing the multi-trajectory comparison operator $\mathcal{M}$, we obtain target-trajectory-dependent comparison signals $s_i(x)$ from the $G$ trajectories. Each signal is sensitive to the counterfactual differences between the target trajectory and its reference trajectories. These differences reflect the effects of different generated tokens in the reasoning process and are injected into optimization through two paths: (i) $\phi_i$ is mapped to the shaped reward $R'_i(x)$, enabling finer sequence-level credit assignment; and (ii) the token-level mask $\mathbf{m}_i$ restricts gradient updates to modified tokens (Eq.~\eqref{eq:mask_grad}), enabling selective token-granular credit assignment. Thus, IBPO does not rely on an unconditional variance-reduction guarantee. Instead, it uses more informative comparison signals to reduce interference from irrelevant tokens and irrelevant trajectories.

    Concretely, $\phi_i$ provides process-level feedback for incorrect trajectories rather than relying only on terminal rewards; $\mathbf{m}_i$ further filters out gradient contributions from correct tokens, making update signals more precise. The following subsection gives a conditional finite-sample variance analysis for the reward-shaping path. The token-masking path is defined as one possible output form of $\mathcal{M}$ in Appendix~\ref{app:shaping_instance} and is integrated into GSPO through the masked ratio in Appendix~\ref{sec:ibpo_gspo}.
\end{proof}

\subsection{Conditional Variance Reduction for Centered Advantages}

We consider a group of trajectories $\{\tau_i\}_{i=1}^G$ sampled from the policy $\pi_\theta$ for a fixed input $x$. To make the finite-sample variance calculation clear, we analyze an idealized but standard setting: conditional on $x$, $(Y_i,\phi_i)$ are exchangeable within the group and share the same marginal variances and covariances. This assumption allows $\phi_i$ to have weak dependence on other samples through within-group reference selection. Therefore, the conclusion below should be understood as a conditional analysis of the direction in which the IBPO shaping term acts, not as an unconditional guarantee over all implementation details. Let:
\begin{itemize}
    \item $Y_i = R(\tau_i) \in \{-1, 1\}$ denote the terminal reward of the $i$-th trajectory; for simplicity, we assume a binary reward. The implementation uses the correctness reward $r(x,y_i)\in\{0,1\}$; Appendix~\ref{sec:variance_reduction} uses the affine recoding $Y_i=2r(x,y_i)-1$ only to simplify covariance notation.
    \item $\phi_i \in [0, 1]$ be the counterfactual comparison signal introduced by IBPO, satisfying:
    \[
    \phi_i =
    \begin{cases}
        0, & \text{if } Y_i = 1 \ (\text{correct trajectory}); \\
        >0, & \text{if } Y_i = -1 \ (\text{incorrect trajectory}).
    \end{cases}
    \]
    In addition, we assume that $\phi_i$ effectively reflects the trajectory's ``recoverability'' or ``consistency with correct reasoning'': the closer an incorrect trajectory is to the correct reasoning process, the larger $\phi_i$ is.
\end{itemize}

Based on this, GSPO and IBPO define the following within-group centered advantage terms; the standard-deviation normalization factor is omitted for now and discussed later:
\begin{align}
    A^{\mathrm{GSPO}}_i &= Y_i - \bar{Y}, \quad \text{where } \bar{Y} = \frac{1}{G} \sum_{j=1}^G Y_j, \\
    A^{\mathrm{IBPO}}_i &= (Y_i + \lambda \phi_i) - (\bar{Y} + \lambda \bar{\phi}) = A^{\mathrm{GSPO}}_i + \lambda (\phi_i - \bar{\phi}),
\end{align}
where $\lambda > 0$ is the shaping weight and $\bar{\phi} = \frac{1}{G} \sum_{j=1}^G \phi_j$.

For the general exchangeable setting, we use the following structural condition:

\begin{condition}[Effective negative correlation and non-degeneracy]
    \label{cond:negative_corr}
    Let
    \[
        C_{\mathrm{in}}=\mathrm{Cov}(Y_i,\phi_i),
        \qquad
        C_{\mathrm{out}}=\mathrm{Cov}(Y_i,\phi_j),\quad j\ne i .
    \]
    The terminal reward $Y_i$ and comparison signal $\phi_i$ satisfy $C_{\mathrm{in}}-C_{\mathrm{out}}<0$. In addition, let
    \[
        V_{\mathrm{in}}=\mathrm{Var}(\phi_i),
        \qquad
        V_{\mathrm{out}}=\mathrm{Cov}(\phi_i,\phi_j),\quad j\ne i ,
    \]
    and require $V_{\mathrm{in}}-V_{\mathrm{out}}>0$, meaning that the comparison signal is not constant within the group. When cross-sample correlation introduced by within-group reference selection is negligible, $C_{\mathrm{out}}\approx0$, and the condition reduces to $\mathrm{Cov}(Y_i,\phi_i)<0$. Intuitively, correct trajectories ($Y_i=1$) force $\phi_i=0$, whereas incorrect trajectories ($Y_i=-1$) have $\phi_i>0$, and larger $\phi_i$ indicates closer proximity to correct reasoning.
\end{condition}

\begin{lemma}[Design-induced negative correlation without cross-sample dependence]
    \label{lem:design_negative_corr}
    Suppose the comparison signal depends only on the target trajectory, so there is no cross-sample dependence and $\mathrm{Cov}(Y_i,\phi_j)=0$ for $j\ne i$. If $0<\mathbb{P}(Y_i=1)<1$, $\phi_i=0$ when $Y_i=1$, and $\mathbb{E}[\phi_i\mid Y_i=-1]>0$, then
    \[
        \mathrm{Cov}(Y_i,\phi_i)<0.
    \]
    Therefore, the negative-correlation part of Condition~\ref{cond:negative_corr} holds when $C_{\mathrm{out}}=0$.
\end{lemma}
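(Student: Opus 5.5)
The plan is to compute $\mathrm{Cov}(Y_i,\phi_i)=\mathbb{E}[Y_i\phi_i]-\mathbb{E}[Y_i]\,\mathbb{E}[\phi_i]$ directly from the binary structure of $Y_i$, by conditioning on its two values. Write $p=\mathbb{P}(Y_i=1)\in(0,1)$ and $\mu=\mathbb{E}[\phi_i\mid Y_i=-1]>0$. Since $\phi_i=0$ whenever $Y_i=1$, the law of total expectation gives $\mathbb{E}[\phi_i]=(1-p)\mu$ and $\mathbb{E}[Y_i\phi_i]=p\cdot 1\cdot 0+(1-p)\cdot(-1)\cdot\mu=-(1-p)\mu$. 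Also $\mathbb{E}[Y_i]=p-(1-p)=2p-1$.

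Substituting these quantities, I would simplify
\[
\mathrm{Cov}(Y_i,\phi_i)=-(1-p)\mu-(2p-1)(1-p)\mu=-(1-p)\mu\,(1+2p-1)=-2p(1-p)\mu .
\]
Each factor $p$, $1-p$ and $\mu$ is strictly positive under the hypotheses, so the covariance is strictly negative. All expectations are finite because $\phi_i\in[0,1]$, which makes the computation legitimate.

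For the final claim, the hypothesis of no cross-sample dependence gives $C_{\mathrm{out}}=\mathrm{Cov}(Y_i,\phi_j)=0$. Hence $C_{\mathrm{in}}-C_{\mathrm{out}}=C_{\mathrm{in}}<0$, which is exactly the negative-correlation part of Condition~\ref{cond:negative_corr}.

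There is no real obstacle. The only care needed is to use the conditional mean $\mu$ rather than assuming $\phi_i>0$ almost surely on $\{Y_i=-1\}$, and to note that both hypotheses $0<p$ and $p<1$ are needed. If $p=0$ or $p=1$, the factor $p(1-p)$ vanishes and the covariance is zero. An alternative route is to express the covariance as $p(1-p)\bigl(\mathbb{E}[\phi_i\mid Y_i=1]-\mathbb{E}[\phi_i\mid Y_i=-1]\bigr)\cdot 2$, the standard two-point formula, which gives the same conclusion immediately.
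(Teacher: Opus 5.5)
Your proposal is correct and follows the paper's proof: you condition on the two values of $Y_i$ to obtain $\mathbb{E}[Y_i\phi_i]=-(1-p)\mu$, $\mathbb{E}[Y_i]=2p-1$ and $\mathbb{E}[\phi_i]=(1-p)\mu$, which gives $\mathrm{Cov}(Y_i,\phi_i)=-2p(1-p)\mu<0$. Your explicit remark that $C_{\mathrm{out}}=0$ then yields $C_{\mathrm{in}}-C_{\mathrm{out}}<0$ closes the lemma's final sentence, a step the paper leaves implicit.
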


\begin{proof}
    Let $p=\mathbb{P}(Y_i=1)$ and $m=\mathbb{E}[\phi_i\mid Y_i=-1]$. Since $Y_i\in\{-1,1\}$ and $\phi_i=0$ on correct trajectories, we have
    \[
        \mathbb{E}[Y_i\phi_i]=-(1-p)m,\quad
        \mathbb{E}[Y_i]=2p-1,\quad
        \mathbb{E}[\phi_i]=(1-p)m .
    \]
    Therefore,
    \[
        \mathrm{Cov}(Y_i,\phi_i)
        =\mathbb{E}[Y_i\phi_i]-\mathbb{E}[Y_i]\mathbb{E}[\phi_i]
        =-2p(1-p)m<0 .
    \]
\end{proof}

\begin{proposition}[Conditional variance reduction for centered advantages]
    \label{prop:conditional_variance_reduction}
    Under Condition~\ref{cond:negative_corr} and group size $G \geq 2$, there exists $\lambda_{\max} > 0$ such that for any $\lambda \in (0, \lambda_{\max})$:
    \[
    \mathrm{Var}\left( A^{\mathrm{IBPO}}_i \right) < \mathrm{Var}\left( A^{\mathrm{GSPO}}_i \right).
    \]
\end{proposition}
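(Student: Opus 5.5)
The plan is to expand $\mathrm{Var}(A^{\mathrm{IBPO}}_i)$ as a quadratic polynomial in $\lambda$. Write $D_i=\phi_i-\bar\phi$. Since $A^{\mathrm{IBPO}}_i=A^{\mathrm{GSPO}}_i+\lambda D_i$, we have
\[
\mathrm{Var}(A^{\mathrm{IBPO}}_i)=\mathrm{Var}(A^{\mathrm{GSPO}}_i)+2\lambda\,\mathrm{Cov}(A^{\mathrm{GSPO}}_i,D_i)+\lambda^2\,\mathrm{Var}(D_i).
\]
The claim therefore reduces to two facts. The linear coefficient $\mathrm{Cov}(Y_i-\bar Y,\phi_i-\bar\phi)$ must be strictly negative, and the quadratic coefficient $\mathrm{Var}(D_i)$ must be finite and positive. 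Given these, the explicit choice $\lambda_{\max}=-2\,\mathrm{Cov}(A^{\mathrm{GSPO}}_i,D_i)/\mathrm{Var}(D_i)$ works: for every $\lambda\in(0,\lambda_{\max})$ the increment $\lambda\bigl(2\,\mathrm{Cov}+\lambda\,\mathrm{Var}(D_i)\bigr)$ is strictly negative.

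The main step is computing both coefficients under the exchangeability assumption, expressing them through $C_{\mathrm{in}},C_{\mathrm{out}},V_{\mathrm{in}},V_{\mathrm{out}}$. For the cross term, expand bilinearly:
\[
\mathrm{Cov}(Y_i-\bar Y,\phi_i-\bar\phi)=\mathrm{Cov}(Y_i,\phi_i)-\mathrm{Cov}(Y_i,\bar\phi)-\mathrm{Cov}(\bar Y,\phi_i)+\mathrm{Cov}(\bar Y,\bar\phi).
\]
Exchangeability gives $\mathrm{Cov}(Y_i,\bar\phi)=\mathrm{Cov}(\bar Y,\phi_i)=\tfrac1G\bigl(C_{\mathrm{in}}+(G-1)C_{\mathrm{out}}\bigr)$, and $\mathrm{Cov}(\bar Y,\bar\phi)$ takes the same value. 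Hence the cross term equals
\[
C_{\mathrm{in}}-\tfrac1G\bigl(C_{\mathrm{in}}+(G-1)C_{\mathrm{out}}\bigr)=\tfrac{G-1}{G}(C_{\mathrm{in}}-C_{\mathrm{out}}).
\]
Here one should check that $\mathrm{Cov}(Y_j,\phi_i)=C_{\mathrm{out}}$ for every $j\ne i$, i.e.\ that exchangeability makes the cross-covariance symmetric. The identical computation gives $\mathrm{Var}(D_i)=\tfrac{G-1}{G}(V_{\mathrm{in}}-V_{\mathrm{out}})$. By Condition~\ref{cond:negative_corr} and $G\ge2$, the first is strictly negative and the second strictly positive. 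This yields
\[
\lambda_{\max}=\frac{2(C_{\mathrm{out}}-C_{\mathrm{in}})}{V_{\mathrm{in}}-V_{\mathrm{out}}}>0,
\]
where the common factor $\tfrac{G-1}{G}$ cancels. Boundedness of $\phi_i\in[0,1]$ and $Y_i\in\{-1,1\}$ guarantees that all moments are finite.

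I expect the main obstacle to be bookkeeping rather than a conceptual difficulty: making sure the exchangeability hypothesis really gives a single value $C_{\mathrm{out}}$ for both $\mathrm{Cov}(Y_i,\phi_j)$ and $\mathrm{Cov}(Y_j,\phi_i)$, and that the $1/G$ factors combine correctly. I would state the symmetric-covariance reading of exchangeability explicitly at the start. Finally, I would remark that in the no-cross-dependence case, Lemma~\ref{lem:design_negative_corr} supplies $C_{\mathrm{in}}<0=C_{\mathrm{out}}$, so the result applies directly to the design of $\phi_i$.
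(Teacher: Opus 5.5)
Your proposal is correct and takes essentially the same route as the paper. You expand the variance as a quadratic in $\lambda$, use exchangeability to get the coefficients $\tfrac{G-1}{G}(C_{\mathrm{in}}-C_{\mathrm{out}})$ and $\tfrac{G-1}{G}(V_{\mathrm{in}}-V_{\mathrm{out}})$, and take $\lambda_{\max}=2(C_{\mathrm{out}}-C_{\mathrm{in}})/(V_{\mathrm{in}}-V_{\mathrm{out}})$; your explicit check that exchangeability forces $\mathrm{Cov}(Y_j,\phi_i)=\mathrm{Cov}(Y_i,\phi_j)$ fills in a step the paper only asserts.
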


\begin{proof}
    From $A^{\mathrm{IBPO}}_i = A^{\mathrm{GSPO}}_i + \lambda (\phi_i - \bar{\phi})$, expanding the variance gives:
    \begin{align}
        \mathrm{Var}(A^{\mathrm{IBPO}}_i)
        &= \mathrm{Var}\big( A^{\mathrm{GSPO}}_i + \lambda (\phi_i - \bar{\phi}) \big) \nonumber \\
        &= \mathrm{Var}(A^{\mathrm{GSPO}}_i)
        + \lambda^2 \mathrm{Var}(\phi_i - \bar{\phi})
        + 2\lambda \, \mathrm{Cov}\big( A^{\mathrm{GSPO}}_i,\, \phi_i - \bar{\phi} \big).
    \end{align}
    Note that $A^{\mathrm{GSPO}}_i = Y_i - \bar{Y}$. Because the within-group samples are exchangeable conditional on the input $x$, the finite-sample covariance can be computed exactly without taking $G$ to infinity. Let
    \[
        C_{\mathrm{in}}=\mathrm{Cov}(Y_i,\phi_i),
        \qquad
        C_{\mathrm{out}}=\mathrm{Cov}(Y_i,\phi_j),\quad j\ne i ,
    \]
    Then:
    \[
    \mathrm{Cov}\big( A^{\mathrm{GSPO}}_i,\, \phi_i - \bar{\phi} \big)
    =
    \mathrm{Cov}\big( Y_i-\bar{Y},\, \phi_i - \bar{\phi} \big)
    =
    \left(1-\frac{1}{G}\right)(C_{\mathrm{in}}-C_{\mathrm{out}}) < 0,
    \]
    where the strict inequality is guaranteed by Condition~\ref{cond:negative_corr}. Similarly, let
    \[
        V_{\mathrm{in}}=\mathrm{Var}(\phi_i),
        \qquad
        V_{\mathrm{out}}=\mathrm{Cov}(\phi_i,\phi_j),\quad j\ne i ,
    \]
    Then
    \[
        \mathrm{Var}(\phi_i-\bar{\phi})
        =
        \left(1-\frac{1}{G}\right)(V_{\mathrm{in}}-V_{\mathrm{out}}).
    \]
    By the non-degeneracy part of Condition~\ref{cond:negative_corr}, $V_{\mathrm{in}}-V_{\mathrm{out}}>0$; otherwise, the shaping term would not provide distinguishable within-group information.

    Let $C=-(C_{\mathrm{in}}-C_{\mathrm{out}})>0$ and $V_\phi=V_{\mathrm{in}}-V_{\mathrm{out}}>0$. Substituting into the expression above yields:
    \[
    \mathrm{Var}(A^{\mathrm{IBPO}}_i)
    =
    \mathrm{Var}(A^{\mathrm{GSPO}}_i)
    +
    \left(1-\frac{1}{G}\right)
    \left(
    \lambda^2 V_\phi - 2\lambda C
    \right).
    \]
    Because $G\ge2$ and $1-\frac{1}{G}>0$, when
    \[
        0<\lambda<\lambda_{\max}
        =
        \frac{2C}{V_\phi}
    \]
    the quadratic term in parentheses is strictly negative, and thus
    $\mathrm{Var}(A^{\mathrm{IBPO}}_i)<\mathrm{Var}(A^{\mathrm{GSPO}}_i)$.

    This conclusion characterizes only the within-group centered advantage term before standard-deviation normalization. GSPO implementations typically also divide by the empirical within-group standard deviation, so the sample variance after normalization is not itself the object explained by this theoretical result. The full policy-gradient variance also depends on the correlation between $\nabla_\theta \log \pi_\theta(\tau_i \mid x)$ and the shaping signal, trajectory length, the sampling distribution, and optimizer implementation. Therefore, the proposition should be understood as a consistency analysis showing how the IBPO shaping term improves the structure of update signals, rather than as an unconditional guarantee on the full policy-gradient variance.
\end{proof}

\paragraph{Discussion.}
The proposition shows that, under Condition~\ref{cond:negative_corr}, the shaping term $\lambda \phi_i$ introduced by IBPO's counterfactual comparison can reduce the variance of pre-normalization centered advantage terms. It is important to emphasize that the effective negative correlation here is not an independently discovered empirical law, but a structural condition partially induced by the design of the shaping term: correct trajectories typically do not require recoverability rewards, while incorrect but repairable trajectories receive additional shaping signals. Lemma~\ref{lem:design_negative_corr} explicitly demonstrates this in the target-trajectory-local case without cross-sample dependence. The $C_{\mathrm{out}}$ term in Condition~\ref{cond:negative_corr} extends the statement to a more general exchangeable setting, capturing weak dependence that may arise from within-group reference selection. Thus, the proposition does not prove that IBPO necessarily reduces variance in all settings. Rather, it shows that when the shaping signal genuinely distinguishes recoverable from unrecoverable errors and $\lambda$ is not too large, the IBPO advantage estimate is not merely adding arbitrary noise, but has a plausible mechanism for improving the structure of within-group update signals. Changes in the full gradient variance still need to be measured empirically.

\paragraph{Relation to training dynamics.}
Proposition~\ref{prop:conditional_variance_reduction} provides a directional explanation, not an unconditional guarantee on the full policy-gradient variance. It shows that when the shaping signal truly distinguishes recoverable from unrecoverable errors and the shaping weight is not too large, IBPO's advantage estimate need not merely add arbitrary noise and may instead improve the within-group update-signal structure. The smoother reward evolution and faster convergence in Fig.~\ref{fig:quxiaotu} are consistent with this mechanistic explanation.

\section{Details of Compute-Budget Matching}
\label{sec:appendix-compute}
To ensure fair comparison, we match the total training-compute budget of different methods by considering: (1) the number of sampled trajectories and (2) the total compute cost. Therefore, we compare performance only under the same training-compute budget.

In our experiments, the compute budgets of IBPO+GSPO and GSPO are matched through a unified training-compute estimate. Specifically, for each prompt $x$, IBPO+GSPO first generates 8 responses $y$. For each incorrect response $y_i$, it concatenates $y_i$ with the original input $x$ and a reference response to form a new input, then generates a corrected output. When a correct response exists in the group, the reference is preferentially sampled from correct responses; otherwise, it is randomly sampled from other incorrect responses in the group, excluding the target response $y_i$ itself. The correction-stage input sequence is longer because it concatenates additional context, and the quadratic complexity of attention makes each correction trajectory more expensive than base sampling. Therefore, we do not use rollout count as the budget-alignment criterion. The horizontal axis in Fig.~\ref{fig:quxiaotu} corresponds to this training-compute budget.

\section{Instantiation Details of IBPO}
\label{app:instantiation}

This appendix presents one concrete instantiation of IBPO used in our experiments, namely the \emph{compare-and-correct} mechanism, along with the integrated training pipeline and implementation details when combined with GSPO. We emphasize that the following design is one concrete choice for the comparison operator $\mathcal{M}$ in the main text; the core formulation of IBPO does not depend on this particular instantiation.

\subsection{Operator}
\label{app:correction_operator}

\paragraph{Instantiation choice.}
In our experiments, we instantiate the general comparison operator $\mathcal{M}$ with a \emph{compare-and-correct} mechanism. We emphasize that this is \emph{one concrete implementation} of IBPO, chosen for its simplicity and effectiveness on verifiable reasoning tasks, rather than a requirement of the IBPO formulation itself.

Concretely, we first generate multiple candidate reasoning trajectories, then use the model itself to compare these trajectories and rewrite them into corrected outputs. This implementation can be viewed as one concrete choice of $\mathcal{M}$ that maps counterfactual differences to a computable shaping term $\phi(\cdot)$. To avoid tying the contribution of this work to a specific implementation, we defer details such as how trajectory diversity is induced and how comparison inputs are constructed to this appendix.

Given two candidate reasoning trajectories/responses for the same input $x$, namely the target response $y$ and the reference response $y^{\mathrm{ref}}$, we construct a correction input $\tilde{x}=(x; y, y^{\mathrm{ref}})$ and let the model generate a revised output conditioned on $\tilde{x}$:
\begin{equation}
    \hat y \sim \pi_\theta(\cdot \mid \tilde{x}), \qquad
    \hat y = \mathcal{C}(x; y, y^{\mathrm{ref}}).
\end{equation}

\paragraph{Prompt template.}
In all experiments, we use the following compare-and-correct instruction template, with minor adjustments for task format when needed:
\begin{quote}
    You are given two candidate solutions to the same problem.
    Compare them step by step, identify any inconsistencies or errors,
    and then produce a corrected solution and final answer.
\end{quote}

\paragraph{Reference sampling.}
For each target response $y_i$ to be corrected, we sample the reference response $y_i^{\mathrm{ref}}$ as follows. If the group contains at least one correct response, we sample uniformly from the set of correct responses. Otherwise, we sample uniformly from the set of other incorrect responses $\{y_j:r(x,y_j)=0,\ j\ne i\}$. Thus, the fallback reference response is never the target response itself. This strategy is designed to provide a relatively stronger, or at least different, counterfactual reference for comparison without introducing external supervision.

\subsection{Shaping Instance: Recoverability-Induced Reward}
\label{app:shaping_instance}

We use a shaping instance based on \emph{recoverability} to define $\phi(\cdot)$. Let $r(x,y)\in\{0,1\}$ denote the terminal correctness reward, i.e., whether the final answer is correct. For an incorrect response $y$ and its corrected output $\hat y=\mathcal{C}(x;y,y^{\mathrm{ref}})$, we define the implicit process-shaping term as:
\begin{equation}
    \Delta(x;y,y^{\mathrm{ref}})
    =
    \rho\cdot \mathbb{I}\!\left[r(x,y)=0 \ \wedge\ r(x,\hat y)=1\right],
    \qquad \rho=0.5.
    \label{eq:app_delta}
\end{equation}
The resulting shaped sequence-level reward is
\begin{equation}
    r'(x,y) = r(x,y) + \lambda\,\Delta(x;y,y^{\mathrm{ref}}).
    \label{eq:app_shaped_reward}
\end{equation}

\paragraph{Token-level edit-distance variants.}
When correction succeeds, i.e., $r(x,y)=0 \wedge r(x,\hat y)=1$, we can further localize the modified tokens by computing the token-level edit distance between the original response $y$ and the corrected output $\hat y$. Let $y=(y_1,\ldots,y_T)$ and $\hat y=(\hat y_1,\ldots,\hat y_T)$ after alignment, and define the set of unchanged tokens as $\mathcal{U}=\{t: y_t = \hat y_t\}$.

\textbf{IBPO-ratio} (unchanged-ratio reward shaping): use the fraction of unchanged tokens $|\mathcal{U}|/T$ as the recoverability measure, replacing the fixed scale $\rho$ in Eq.~\eqref{eq:app_delta}:
\begin{equation}
    \Delta^{\mathrm{ratio}}(x;y,y^{\mathrm{ref}})
    =
    \frac{|\mathcal{U}|}{T}\cdot \mathbb{I}\!\left[r(x,y)=0 \ \wedge\ r(x,\hat y)=1\right].
    \label{eq:app_delta_ratio}
\end{equation}
Intuitively, a higher unchanged ratio indicates that the original reasoning is closer to being correct and should receive a larger shaping reward.

\textbf{IBPO-mask} (token-level gradient mask): mask the gradient contribution of unchanged tokens and apply policy-gradient updates only to modified tokens. Define the token-level mask $m_t = \mathbb{I}[t \notin \mathcal{U}]$; then the gradient coefficient for the $t$-th token in the policy gradient is multiplied by $m_t$:
\begin{equation}
    \nabla_\theta \mathcal{J}^{\mathrm{mask}}
    \propto
    \frac{1}{M}
    \sum_{t=1}^{T} m_t \cdot \widehat A(y) \cdot \nabla_\theta \log \pi_\theta(y_t \mid y_{<t}, x),
    \qquad
    M=\max\!\left(1,\sum_{t=1}^T m_t\right),
    \label{eq:app_mask}
\end{equation}
Here $\widehat A(y)$ is the within-group normalized advantage obtained from the sequence-level reward or shaped reward and broadcast to the selected tokens. It is not a token-level value model and is not an advantage estimated token by token. Unchanged tokens ($m_t=0$) are not penalized because they are preserved after correction, suggesting that they may be correct reasoning steps. Its integration with GSPO is given in the masked-ratio form in Appendix~\ref{sec:ibpo_gspo}.

\paragraph{Why not penalize failed correction.}
When $r(x,y)=0$ and $r(x,\hat y)=0$, we do not apply an additional negative penalty. This avoids misattributing insufficient correction ability or poor reference quality to the intrinsic quality of the original reasoning. This design choice helps prevent unnecessary bias and training instability.

\subsection{Full-Rewrite Detection and Suppression}
\label{app:rewrite_detection}

\paragraph{Problem.}
During counterfactual compare-and-correct generation, the model may not locally repair the original incorrect trajectory. Instead, it may ignore the original reasoning entirely and generate a new solution from scratch. This ``full rewrite'' behavior can lead to reward hacking: when the corrected output happens to be correct, the shaping reward is incorrectly attributed to the ``recoverability'' of the original trajectory, even though the original reasoning process was not actually used.

\paragraph{Detection mechanism.}
We use Python's token-level edit distance to detect full rewrites. Let $d(a,b)$ denote the normalized edit distance between sequences $a$ and $b$, with values in $[0,1]$. When correction succeeds, i.e., $r(x,y)=0 \wedge r(x,\hat y)=1$, we classify the case as a full rewrite and set $\Delta=0$ if both of the following conditions hold:
\begin{equation}
    d(y, \hat y) > \alpha \quad \text{and} \quad d(y, \hat y) > d(\hat y, y^{\mathrm{ref}}),
    \label{eq:rewrite_filter}
\end{equation}
where $\alpha$ is the edit-distance threshold. The first condition detects how far the corrected output deviates from the original trajectory. The second condition confirms that the corrected output is closer to the reference answer than to the original trajectory, meaning that the model tends to copy the reference rather than repair the original reasoning.

Combining this with Eq.~\eqref{eq:app_delta}, the complete shaping term with rewrite filtering is:
\begin{equation}
    \Delta(x;y,y^{\mathrm{ref}})
    =
    \rho\cdot \mathbb{I}\!\left[r(x,y)=0 \wedge r(x,\hat y)=1 \wedge \neg\text{rewrite}(y,\hat y,y^{\mathrm{ref}})\right].
    \label{eq:app_delta_filtered}
\end{equation}

\paragraph{Applicability to code tasks.}
The applicability of IBPO does not depend on whether the output is natural-language reasoning text; it depends on whether the training task provides verifiable terminal feedback. Code-generation tasks naturally satisfy this condition: candidate programs can receive binary or scalar rewards through unit tests, execution results, or compilation feedback. Therefore, IBPO can directly treat multiple candidate programs for the same programming problem as counterfactual trajectories and use differences between programs that pass and fail tests to construct process-sensitive shaping signals.

In our LiveCodeBench experiments, program semantic correctness is always determined by execution-based test feedback, not by edit distance. Edit distance is used only as a structural-fidelity check in the current compare-and-correct instantiation: it determines whether the corrected program still preserves much of the local structure of the original failing program, thereby avoiding the erroneous attribution of a fully rewritten correct program to the ``recoverability'' of the original failed trajectory. Because program semantics are often expressed through local token dependencies, syntactic structure, and control-flow organization, higher token/structural overlap in practice means that the correction process is more likely repairing the original trajectory rather than generating an unrelated solution from scratch.

At the same time, the IBPO framework itself is not tied to token-level edit distance. For code tasks, the comparison operator $\mathcal{M}$ can naturally be replaced or extended with code-specific signals such as AST distance, control-flow-graph similarity, execution-trace similarity, failing-test localization, or compiler-error localization. We use edit distance in this paper to keep the instantiation simple, task-agnostic, and computationally inexpensive; code-specific comparison operators are a direct extension.

\paragraph{Threshold sensitivity analysis.}
We conduct a sensitivity analysis over the threshold $\alpha$ on Qwen3-32B (AIME25). Results are shown in Table~\ref{tab:rewrite_threshold}.

\begin{table}[h]
\centering
\small
\begin{tabular}{ccc}
\toprule
\textbf{Threshold $\alpha$} & \textbf{Flagged as Rewrite (\%)} & \textbf{AIME25 (\%)} \\
\midrule
50\% & 27.45 & 85.3 \\
55\% & 23.36 & 85.5 \\
60\% & 18.42 & 85.6 \\
65\% & 13.37 & 85.5 \\
70\% & 9.82 & 85.4 \\
75\% & 5.73 & 85.3 \\
80\% & 2.06 & 85.3 \\
\bottomrule
\end{tabular}
\caption{Rewrite-detection rate and AIME25 accuracy under different edit-distance thresholds $\alpha$ (Qwen3-32B). Performance remains stable in the 55\%--70\% range, and 60\% is selected as the midpoint of this plateau.}
\label{tab:rewrite_threshold}
\end{table}

\paragraph{Adaptive threshold.}
A fixed threshold is not ideal for cross-domain generalization. A more robust alternative is distribution-based anomaly detection: compute the mean $\mu_d$ and standard deviation $\sigma_d$ of all edit distances in the current batch, and set the threshold to $\alpha=\mu_d+2\sigma_d$. Chebyshev's inequality guarantees that, even for non-normal distributions, at least 75\% of the data lie within $\mu_d \pm 2\sigma_d$, making this criterion conservative and valid for arbitrary distributions.

\paragraph{Suppressing rewrites at the source through reinforcement learning.}
Beyond post-hoc detection, we also incorporate an edit-distance constraint directly into the reward function: when the edit distance between the corrected output and the original incorrect trajectory is too large, we apply an additional negative reward penalty. Concretely, multiple corrected outputs are generated simultaneously, and contrastive reinforcement learning is applied within the group so that the model naturally learns during training that local repair receives higher return than full rewriting. This suppresses the tendency to rewrite at the behavior-policy level. It is a more fundamental solution than threshold filtering: instead of discarding samples after rewriting occurs, it uses incentives to make the model actively avoid rewriting.

\paragraph{Positioning.}
The edit-distance check is a defensive safeguard against reward hacking, not a core component of the IBPO framework. When the model has sufficient base capability, full rewrites are intrinsically rare events, and the specific threshold choice has negligible influence on the main experimental results. Combined with the RL training mechanism above, the model's rewrite tendency further decreases as training proceeds, reducing dependence on the threshold.

\subsection{Trajectory Diversity and Coupling Reduction}
\label{app:decoupling}

IBPO relies on sampling sufficiently diverse trajectories for the same input. In our experiments, we use the following strategies to increase trajectory diversity and reduce trajectory coupling:

\begin{itemize}
    \item \textbf{Stochastic decoding.} We use different random seeds, temperatures, and top-$p$/top-$k$ sampling parameters.
    \item \textbf{Prompt perturbation (optional).} We apply slight perturbations to the system prompt or formatting prompt to induce trajectory-level differences.
\end{itemize}

\section{Implementation Details}
\label{app:impl}

\paragraph{Infrastructure.}
Experiments are conducted on 32 Nvidia A800 (80G) GPUs.

\paragraph{Optimization.}
We use an initial learning rate of $5\times10^{-7}$ with cosine decay, a minimum ratio of 0.1, and linear warmup for 3\% of the total steps. The entropy-regularization coefficient is set to $0$.

\paragraph{Sampling.}
GSPO uses $G=64$ rollouts per prompt, while IBPO uses $G=8$ rollouts to approximately match the overall compute budgets of the methods.

\section{One Instantiation of IBPO: Integration with GSPO}
\label{sec:ibpospo1}

GSPO is used only as a carrier optimizer; replacing GSPO with GRPO or PPO does not change the IBPO formulation. In the preceding sections, we introduced the general IBPO formulation and its reward-shaping definition. IBPO is a training formulation orthogonal to the underlying sequence-level reinforcement-learning method. This section describes how to integrate this formulation seamlessly into GSPO, a representative sequence-level RL algorithm.

\subsection{Preliminaries: Sequence-Level Reinforcement Learning}

We view the autoregressive language model parameterized by $\theta$ as a policy $\pi_\theta$. Let $\mathcal{D}$ denote the query set. Given a query $x$, the model generates a full response $y=(y_1,\dots,y_{|y|})$, with sequence probability
\begin{equation}
    \pi_\theta(y\mid x)=\prod_{t=1}^{|y|}\pi_\theta(y_t\mid x,y_{<t}).
\end{equation}

We consider a general class of sequence-level policy-optimization objectives:
\begin{equation}
    J(\theta)
    =
    \mathbb{E}_{x \sim \mathcal{D},\, y \sim \pi_{\theta_{\mathrm{old}}}(\cdot \mid x)}
    \Bigl[
    \mathcal{L}\bigl(
    s(\theta; x, y),
    \hat{A}(x,y)
    \bigr)
    \Bigr],
    \label{eq:general_seq_rl}
\end{equation}
where $s(\theta; x, y)$ denotes the sequence-level importance-sampling weight, and $\hat A(x,y)$ is constructed from sequence-level rewards. Methods such as GSPO and GRPO can be viewed as specific instantiations of this formulation.

IBPO does not change the optimization form in Eq.~\eqref{eq:general_seq_rl}. Instead, it shapes the original sequence-level reward through the model's self-correction process.

\subsection{IBPO and GSPO: Single-Pass Joint Training}
\label{sec:ibpo_gspo}

At each iteration, we perform a \textbf{single} policy update. For the same batch of queries $x$, we simultaneously construct the candidate response set for sequence-level RL and the self-correction results used to evaluate recoverability. IBPO modifies only the reward definition, shaping it into $r'$, while keeping the GSPO surrogate objective unchanged.

\paragraph{(A) GSPO backbone with shaped rewards.}
For each query $x$, sample $G$ responses $\{y_i\}_{i=1}^{G}$ from the old policy. The sequence-level importance ratio in GSPO is defined as
\begin{equation}
    s_i(\theta)=
    \left(
    \frac{\pi_\theta(y_i\mid x)}
    {\pi_{\theta_{\text{old}}}(y_i\mid x)}
    \right)^{\!\frac{1}{|y_i|}}.
\end{equation}
We use the \textbf{shaped reward} to construct within-group advantages:
\begin{equation}
    \hat A_i=
    \frac{r'(x,y_i)-\mu'}{\sigma'},
\end{equation}
where $\mu'$ and $\sigma'$ denote the mean and standard deviation of the shaped rewards within the group. The GSPO objective is
\begin{equation}
    \begin{aligned}
        J_{\mathrm{GSPO}}(\theta)
        =
        \mathbb{E}_{x \sim \mathcal{D},\, \{y_i\}_{i=1}^{G} \sim \pi_{\theta_{\mathrm{old}}}(\cdot \mid x)}
        \Biggl[
        \frac{1}{G}
        \sum_{i=1}^{G}
        \min\Bigl(
        & s_i(\theta)\,\hat{A}_i, \\
        \operatorname{clip}\!\bigl(s_i(\theta),\,1-\epsilon,\,1+\epsilon\bigr)\,\hat{A}_i
        \Bigr)
        \Biggr].
    \end{aligned}
\end{equation}

\paragraph{Masked GSPO variant.}
For IBPO-mask, we keep the sequence-level advantage $\hat A_i$ unchanged and restrict GSPO's sequence-level log-ratio to tokens selected by the mask. Let $\mathbf{m}_i=(m_{i,1},\ldots,m_{i,|y_i|})$ and $M_i=\max(1,\sum_t m_{i,t})$. Define
\begin{equation}
    s_i^{\mathrm{mask}}(\theta)
    =
    \exp\left(
    \frac{1}{M_i}
    \sum_{t=1}^{|y_i|}
    m_{i,t}
    \left[
    \log \pi_\theta(y_{i,t}\mid x,y_{i,<t})
    -
    \log \pi_{\theta_{\mathrm{old}}}(y_{i,t}\mid x,y_{i,<t})
    \right]
    \right).
    \label{eq:gspo_mask_ratio}
\end{equation}
Then replace $s_i(\theta)$ in the GSPO backbone objective above with $s_i^{\mathrm{mask}}(\theta)$. The first-order gradient of this form is equivalent to broadcasting the same sequence-level advantage $\hat A_i$ to the selected tokens and multiplying by $m_{i,t}$. It introduces neither a token-level value model nor a token-level advantage estimator.

\paragraph{(B) Self-correction shaping signal.}
To compute the shaped reward $r'(x,y_i)$, we construct a self-correction instance for each incorrect answer $y_i$. Specifically, we randomly sample a reference answer $y_i^{\text{ref}}$ (from correct answers if the group contains at least one correct answer; otherwise from other incorrect answers satisfying $j\ne i$), and ask the model to compare and correct:
\begin{equation}
    \hat y_i=\mathcal{C}(x; y_i, y_i^{\text{ref}}).
\end{equation}
This process introduces no additional supervision and is used only to assess whether the model can correct an incorrect answer into a correct one. Combined with the full-rewrite filtering in Eq.~\eqref{eq:app_delta_filtered}, we define
\begin{equation}
    \Delta_i=
    \rho\cdot
    \mathbb{I}\!\left[
    r(x,y_i)<1\;\wedge\;r(x,\hat y_i)=1
    \;\wedge\;\neg\mathrm{rewrite}(y_i,\hat y_i,y_i^{\mathrm{ref}})
    \right],
    \qquad \rho=0.5,
    \label{eq:ibpo_delta}
\end{equation}
and obtain the sequence-level shaped reward:
\begin{equation}
    r'(x,y_i)
    =
    r(x,y_i)+\lambda\,\Delta_i.
    \label{eq:ibpo_shaped_reward}
\end{equation}
Although $r'(x,y_i)$ remains formally a sequence-level scalar, its value depends on the counterfactual self-correction process and therefore implicitly encodes process-level, or step-level, credit-assignment information.

\paragraph{(C) Joint GSPO training for correction behavior.}

In addition to sequence-level optimization of the original reasoning input $x$ using the implicit process reward $r'(x,y)$, we further treat \textbf{correction behavior itself as a reasoning task for the same policy on an extended input space} and train it jointly with the \textbf{same GSPO objective}. Formally, this process does not introduce a new Markov decision process (MDP); it simply applies the policy to different conditional inputs, i.e., a prompt-conditioned policy.

Concretely, for each answer $y_i$ whose recoverability needs to be evaluated, we construct a correction input:
\begin{equation}
    \tilde{x}_i = (x;\; y_i,\; y_i^{\text{ref}}),
\end{equation}
where $y_i^{\text{ref}}$ denotes the reference answer. Conditioned on this input, the model generates a corrected output and receives a terminal reward according to final-answer correctness. Importantly, correction inputs are usually different: each incorrect answer $y_i$ produces its own $\tilde{x}_i$. Therefore, we \textbf{do not} mix corrected outputs from different $\tilde{x}_i$ into the same GSPO group for advantage normalization; instead, each correction input forms its own group.

Specifically, for each correction input $\tilde{x}_i$ enabled for joint training, we sample $G_c$ corrected outputs from the old policy:
\[
    \{z_{i,\ell}\}_{\ell=1}^{G_c}
    \sim
    \pi_{\theta_{\mathrm{old}}}(\cdot\mid \tilde{x}_i).
\]
In implementation, the $\hat y_i$ generated in stage (B) for computing the shaping reward can be reused as one sample in this correction group. If an additional $G_c-1$ corrected outputs are needed, their generation cost is fully included in the training budget. The correction reward is defined as
\[
    r^{\mathrm{corr}}_{i,\ell}=r(x,z_{i,\ell}),
\]
which checks whether the corrected output gives the correct final answer to the original problem $x$. The within-correction-group advantage is
\[
    \hat A^{\mathrm{corr}}_{i,\ell}
    =
    \frac{
    r^{\mathrm{corr}}_{i,\ell}
    -
    \mu_i^{\mathrm{corr}}
    }{
    \sigma_i^{\mathrm{corr}}+10^{-8}
    },
\]
where $\mu_i^{\mathrm{corr}}$ and $\sigma_i^{\mathrm{corr}}$ are computed only over the $G_c$ outputs for the same correction input $\tilde{x}_i$. The auxiliary GSPO objective for correction behavior is:
\begin{equation}
    \begin{aligned}
        J_{\mathrm{corr}}(\theta)
        =
        \mathbb{E}_{\tilde{x}_i}
        \Biggl[
        \frac{1}{G_c}
        \sum_{\ell=1}^{G_c}
        \min\Bigl(
        & s^{\mathrm{corr}}_{i,\ell}(\theta)\,\hat{A}^{\mathrm{corr}}_{i,\ell}, \\
        & \operatorname{clip}\!\bigl(s^{\mathrm{corr}}_{i,\ell}(\theta),\,1-\epsilon,\,1+\epsilon\bigr)\,
        \hat{A}^{\mathrm{corr}}_{i,\ell}
        \Bigr)
        \Biggr],
    \end{aligned}
    \label{eq:gspo_joint}
\end{equation}
where
\[
    s^{\mathrm{corr}}_{i,\ell}(\theta)
    =
    \left(
    \frac{
    \pi_\theta(z_{i,\ell}\mid \tilde{x}_i)
    }{
    \pi_{\theta_{\mathrm{old}}}(z_{i,\ell}\mid \tilde{x}_i)
    }
    \right)^{1/|z_{i,\ell}|}.
\]
The final optimization objective is
\[
    J_{\mathrm{total}}(\theta)
    =
    J_{\mathrm{main}}(\theta)
    +
    \eta J_{\mathrm{corr}}(\theta),
\]
where $J_{\mathrm{main}}$ is the GSPO objective in part (A) based on the original input $x$ and shaped reward $r'(x,y)$, and $\eta$ is the weight of the auxiliary correction objective.

We emphasize that this joint training process \textbf{does not introduce an additional optimization stage or a different loss function}. Learning correction ability is simply behavioral generalization of the policy under different input conditions, enabling the model to gradually internalize comparison and correction ability during training without requiring additional multi-round calls at inference time.

\subsection{IBPO + GSPO Algorithm Pseudocode}
\label{ssec:mvpo_workflow}

Combining the stages above, the full workflow of IBPO + GSPO is summarized in Algorithm~\ref{alg:ibpo_gspo_main} and Algorithm~\ref{alg:ibpo_gspo_corr}.

\begin{algorithm}[H]
    \caption{IBPO + GSPO Main Training Procedure}
    \label{alg:ibpo_gspo_main}
    \begin{algorithmic}[1]
        \Require Dataset $\mathcal{D}$; policy $\pi_\theta$ and old policy $\pi_{\theta_{\mathrm{old}}}$; group size $G$; clipping parameter $\epsilon$; shaping weight $\lambda$; auxiliary weight $\eta$; scale $\rho$; correction operator $\mathcal{C}$; terminal reward $r(\cdot)\in\{0,1\}$.
        \Ensure Updated parameters $\theta$.

        \For{each iteration}
        \State Sample a mini-batch of prompts $\mathcal{B}=\{x\}$ from $\mathcal{D}$.
        \State Initialize batch objective $\mathcal{J}_{\mathcal{B}}(\theta)\leftarrow0$.

        \For{each prompt $x \in \mathcal{B}$}
        \State Sample $G$ answers $\{y_i\}_{i=1}^{G} \sim \pi_{\theta_{\mathrm{old}}}(\cdot \mid x)$ and compute terminal rewards $r_i\leftarrow r(x,y_i)$.

        \For{$i=1$ to $G$}
        \If{$r_i = 0$}
        \If{there exists $j$ such that $r(x,y_j)=1$}
        \State Sample reference answer $y_i^{\mathrm{ref}}$ uniformly from $\{y_j: r(x,y_j)=1\}$.
        \Else
        \State Sample reference answer $y_i^{\mathrm{ref}}$ uniformly from $\{y_j: r(x,y_j)=0,\ j\ne i\}$.
        \EndIf
        \State Generate corrected result $\hat y_i \leftarrow \mathcal{C}(x; y_i, y_i^{\mathrm{ref}})$.
        \State $\Delta_i \leftarrow \rho\cdot \mathbb{I}\!\left[r(x,\hat y_i)=1 \wedge \neg\mathrm{rewrite}(y_i,\hat y_i,y_i^{\mathrm{ref}})\right]$.
        \Else
        \State $\Delta_i \leftarrow 0$.
        \EndIf
        \State $r'_i \leftarrow r_i + \lambda\,\Delta_i$, and compute sequence-level ratio $s_i(\theta)$.
        \EndFor

        \State $\mu' \leftarrow \frac{1}{G}\sum_{i=1}^{G} r'_i$.
        \State $\sigma' \leftarrow \sqrt{\frac{1}{G}\sum_{i=1}^{G}(r'_i-\mu')^2}$.
        \For{$i=1$ to $G$}
        \State $\hat A_i \leftarrow \frac{r'_i-\mu'}{\sigma' + 10^{-8}}$.
        \EndFor

        \State $\mathcal{J}_{x}(\theta) \leftarrow
        \frac{1}{G}\sum_{i=1}^{G}
        \min\!\Big(
        s_i(\theta)\hat A_i,\;
        \mathrm{clip}(s_i(\theta),1-\epsilon,1+\epsilon)\hat A_i
        \Big)$.
        \State If joint training is enabled, compute $\mathcal{J}^{\mathrm{corr}}_x(\theta)$ using Algorithm~\ref{alg:ibpo_gspo_corr}; otherwise set $\mathcal{J}^{\mathrm{corr}}_x(\theta)\leftarrow0$.
        \State $\mathcal{J}_{\mathcal{B}}(\theta)\leftarrow
        \mathcal{J}_{\mathcal{B}}(\theta)+
        \mathcal{J}_{x}(\theta)+
        \eta\mathcal{J}^{\mathrm{corr}}_x(\theta)$.
        \EndFor
        \State Update $\theta$ by maximizing $\mathcal{J}_{\mathcal{B}}(\theta)$.
        \EndFor
    \end{algorithmic}
\end{algorithm}

\begin{algorithm}[H]
    \caption{Optional GSPO Auxiliary Training for Correction Behavior}
    \label{alg:ibpo_gspo_corr}
    \begin{algorithmic}[1]
        \Require Original prompt $x$; incorrect-answer set $\{y_i:r_i=0\}$; reference answer $y_i^{\mathrm{ref}}$; generated correction $\hat y_i$; correction group size $G_c$; clipping parameter $\epsilon$; terminal reward $r(\cdot)$.
        \Ensure Auxiliary correction objective $\mathcal{J}^{\mathrm{corr}}_x(\theta)$.

        \State Initialize $\mathcal{J}^{\mathrm{corr}}_x(\theta)\leftarrow0$.
        \For{each incorrect answer $y_i$ enabled for joint training}
        \State Construct correction input $\tilde{x}_i \leftarrow (x;\,y_i,\,y_i^{\mathrm{ref}})$.
        \State Reuse $\hat y_i$ as $z_{i,1}$; if $G_c>1$, sample additional outputs $\{z_{i,\ell}\}_{\ell=2}^{G_c}\sim\pi_{\theta_{\mathrm{old}}}(\cdot\mid\tilde{x}_i)$.
        \State Count all additional correction sampling, verification, and comparison costs in the training budget.
        \For{$\ell=1$ to $G_c$}
        \State Compute $r^{\mathrm{corr}}_{i,\ell}\leftarrow r(x,z_{i,\ell})$ and sequence-level ratio $s^{\mathrm{corr}}_{i,\ell}(\theta)$.
        \EndFor
        \State Compute $\mu_i^{\mathrm{corr}}$, $\sigma_i^{\mathrm{corr}}$, and $\hat A^{\mathrm{corr}}_{i,\ell}$ within the $G_c$ outputs for the same $\tilde{x}_i$.
        \State Accumulate $\mathcal{J}^{\mathrm{corr}}_i(\theta)$ according to Eq.~\eqref{eq:gspo_joint}; different $\tilde{x}_i$ do not share within-group normalization statistics.
        \State $\mathcal{J}^{\mathrm{corr}}_x(\theta)\leftarrow
        \mathcal{J}^{\mathrm{corr}}_x(\theta)+\mathcal{J}^{\mathrm{corr}}_i(\theta)$.
        \EndFor
    \end{algorithmic}
\end{algorithm}


\newpage
\section*{NeurIPS Paper Checklist}

\begin{enumerate}

\item {\bf Claims}
    \item[] Question: Do the main claims made in the abstract and introduction accurately reflect the paper's contributions and scope?
    \item[] Answer: \answerYes{} 
    \item[] Justification: The abstract and introduction state the IBPO framework, its scope, and its empirical focus on math and code reasoning. The claims are further qualified in the method, theory, and limitations sections.
    \item[] Guidelines:
    \begin{itemize}
        \item The answer \answerNA{} means that the abstract and introduction do not include the claims made in the paper.
        \item The abstract and/or introduction should clearly state the claims made, including the contributions made in the paper and important assumptions and limitations. A \answerNo{} or \answerNA{} answer to this question will not be perceived well by the reviewers. 
        \item The claims made should match theoretical and experimental results, and reflect how much the results can be expected to generalize to other settings. 
        \item It is fine to include aspirational goals as motivation as long as it is clear that these goals are not attained by the paper. 
    \end{itemize}

\item {\bf Limitations}
    \item[] Question: Does the paper discuss the limitations of the work performed by the authors?
    \item[] Answer: \answerYes{} 
    \item[] Justification: The paper includes a dedicated limitations section discussing compute overhead, dependence on reliable terminal verification, verifier noise, and possible mitigations.
    \item[] Guidelines:
    \begin{itemize}
        \item The answer \answerNA{} means that the paper has no limitation while the answer \answerNo{} means that the paper has limitations, but those are not discussed in the paper. 
        \item The authors are encouraged to create a separate ``Limitations'' section in their paper.
        \item The paper should point out any strong assumptions and how robust the results are to violations of these assumptions (e.g., independence assumptions, noiseless settings, model well-specification, asymptotic approximations only holding locally). The authors should reflect on how these assumptions might be violated in practice and what the implications would be.
        \item The authors should reflect on the scope of the claims made, e.g., if the approach was only tested on a few datasets or with a few runs. In general, empirical results often depend on implicit assumptions, which should be articulated.
        \item The authors should reflect on the factors that influence the performance of the approach. For example, a facial recognition algorithm may perform poorly when image resolution is low or images are taken in low lighting. Or a speech-to-text system might not be used reliably to provide closed captions for online lectures because it fails to handle technical jargon.
        \item The authors should discuss the computational efficiency of the proposed algorithms and how they scale with dataset size.
        \item If applicable, the authors should discuss possible limitations of their approach to address problems of privacy and fairness.
        \item While the authors might fear that complete honesty about limitations might be used by reviewers as grounds for rejection, a worse outcome might be that reviewers discover limitations that aren't acknowledged in the paper. The authors should use their best judgment and recognize that individual actions in favor of transparency play an important role in developing norms that preserve the integrity of the community. Reviewers will be specifically instructed to not penalize honesty concerning limitations.
    \end{itemize}

\item {\bf Theory assumptions and proofs}
    \item[] Question: For each theoretical result, does the paper provide the full set of assumptions and a complete (and correct) proof?
    \item[] Answer: \answerYes{} 
    \item[] Justification: The theoretical analysis states the exchangeability, effective negative correlation, and non-degeneracy conditions, and provides a finite-sample proof for the pre-normalization centered advantage term in the appendix.
    \item[] Guidelines:
    \begin{itemize}
        \item The answer \answerNA{} means that the paper does not include theoretical results. 
        \item All the theorems, formulas, and proofs in the paper should be numbered and cross-referenced.
        \item All assumptions should be clearly stated or referenced in the statement of any theorems.
        \item The proofs can either appear in the main paper or the supplemental material, but if they appear in the supplemental material, the authors are encouraged to provide a short proof sketch to provide intuition. 
        \item Inversely, any informal proof provided in the core of the paper should be complemented by formal proofs provided in appendix or supplemental material.
        \item Theorems and Lemmas that the proof relies upon should be properly referenced. 
    \end{itemize}

    \item {\bf Experimental result reproducibility}
    \item[] Question: Does the paper fully disclose all the information needed to reproduce the main experimental results of the paper to the extent that it affects the main claims and/or conclusions of the paper (regardless of whether the code and data are provided or not)?
    \item[] Answer: \answerYes{} 
    \item[] Justification: The paper reports datasets, base models, context lengths, decoding settings, rollout counts, optimization hyperparameters, evaluation protocol, baselines, and compute matching details in the main text and appendix. The authors plan to release the code repository after acceptance.
    \item[] Guidelines:
    \begin{itemize}
        \item The answer \answerNA{} means that the paper does not include experiments.
        \item If the paper includes experiments, a \answerNo{} answer to this question will not be perceived well by the reviewers: Making the paper reproducible is important, regardless of whether the code and data are provided or not.
        \item If the contribution is a dataset and\slash or model, the authors should describe the steps taken to make their results reproducible or verifiable. 
        \item Depending on the contribution, reproducibility can be accomplished in various ways. For example, if the contribution is a novel architecture, describing the architecture fully might suffice, or if the contribution is a specific model and empirical evaluation, it may be necessary to either make it possible for others to replicate the model with the same dataset, or provide access to the model. In general. releasing code and data is often one good way to accomplish this, but reproducibility can also be provided via detailed instructions for how to replicate the results, access to a hosted model (e.g., in the case of a large language model), releasing of a model checkpoint, or other means that are appropriate to the research performed.
        \item While NeurIPS does not require releasing code, the conference does require all submissions to provide some reasonable avenue for reproducibility, which may depend on the nature of the contribution. For example
        \begin{enumerate}
            \item If the contribution is primarily a new algorithm, the paper should make it clear how to reproduce that algorithm.
            \item If the contribution is primarily a new model architecture, the paper should describe the architecture clearly and fully.
            \item If the contribution is a new model (e.g., a large language model), then there should either be a way to access this model for reproducing the results or a way to reproduce the model (e.g., with an open-source dataset or instructions for how to construct the dataset).
            \item We recognize that reproducibility may be tricky in some cases, in which case authors are welcome to describe the particular way they provide for reproducibility. In the case of closed-source models, it may be that access to the model is limited in some way (e.g., to registered users), but it should be possible for other researchers to have some path to reproducing or verifying the results.
        \end{enumerate}
    \end{itemize}

\item {\bf Open access to data and code}
    \item[] Question: Does the paper provide open access to the data and code, with sufficient instructions to faithfully reproduce the main experimental results, as described in supplemental material?
    \item[] Answer: \answerNo{} 
    \item[] Justification: The current submission describes the algorithm and experimental settings, and states that the authors plan to release code after acceptance.
    \item[] Guidelines:
    \begin{itemize}
        \item The answer \answerNA{} means that paper does not include experiments requiring code.
        \item Please see the NeurIPS code and data submission guidelines (\url{https://neurips.cc/public/guides/CodeSubmissionPolicy}) for more details.
        \item While we encourage the release of code and data, we understand that this might not be possible, so \answerNo{} is an acceptable answer. Papers cannot be rejected simply for not including code, unless this is central to the contribution (e.g., for a new open-source benchmark).
        \item The instructions should contain the exact command and environment needed to run to reproduce the results. See the NeurIPS code and data submission guidelines (\url{https://neurips.cc/public/guides/CodeSubmissionPolicy}) for more details.
        \item The authors should provide instructions on data access and preparation, including how to access the raw data, preprocessed data, intermediate data, and generated data, etc.
        \item The authors should provide scripts to reproduce all experimental results for the new proposed method and baselines. If only a subset of experiments are reproducible, they should state which ones are omitted from the script and why.
        \item At submission time, to preserve anonymity, the authors should release anonymized versions (if applicable).
        \item Providing as much information as possible in supplemental material (appended to the paper) is recommended, but including URLs to data and code is permitted.
    \end{itemize}

\item {\bf Experimental setting/details}
    \item[] Question: Does the paper specify all the training and test details (e.g., data splits, hyperparameters, how they were chosen, type of optimizer) necessary to understand the results?
    \item[] Answer: \answerYes{} 
    \item[] Justification: The experiments section and appendix specify datasets, models, context lengths, rollout counts, learning rate, schedule, warmup, entropy regularization, batch size, decoding parameters, verifier use, and compute matching.
    \item[] Guidelines:
    \begin{itemize}
        \item The answer \answerNA{} means that the paper does not include experiments.
        \item The experimental setting should be presented in the core of the paper to a level of detail that is necessary to appreciate the results and make sense of them.
        \item The full details can be provided either with the code, in appendix, or as supplemental material.
    \end{itemize}

\item {\bf Experiment statistical significance}
    \item[] Question: Does the paper report error bars suitably and correctly defined or other appropriate information about the statistical significance of the experiments?
    \item[] Answer: \answerYes{} 
    \item[] Justification: Main results report means over five random seeds with 95\% bootstrap confidence intervals and a normal-approximation significance statement for the main method. The evaluation protocol states that 64 independent pass@1 evaluations are averaged.
    \item[] Guidelines:
    \begin{itemize}
        \item The answer \answerNA{} means that the paper does not include experiments.
        \item The authors should answer \answerYes{} if the results are accompanied by error bars, confidence intervals, or statistical significance tests, at least for the experiments that support the main claims of the paper.
        \item The factors of variability that the error bars are capturing should be clearly stated (for example, train/test split, initialization, random drawing of some parameter, or overall run with given experimental conditions).
        \item The method for calculating the error bars should be explained (closed form formula, call to a library function, bootstrap, etc.)
        \item The assumptions made should be given (e.g., Normally distributed errors).
        \item It should be clear whether the error bar is the standard deviation or the standard error of the mean.
        \item It is OK to report 1-sigma error bars, but one should state it. The authors should preferably report a 2-sigma error bar than state that they have a 96\% CI, if the hypothesis of Normality of errors is not verified.
        \item For asymmetric distributions, the authors should be careful not to show in tables or figures symmetric error bars that would yield results that are out of range (e.g., negative error rates).
        \item If error bars are reported in tables or plots, the authors should explain in the text how they were calculated and reference the corresponding figures or tables in the text.
    \end{itemize}

\item {\bf Experiments compute resources}
    \item[] Question: For each experiment, does the paper provide sufficient information on the computer resources (type of compute workers, memory, time of execution) needed to reproduce the experiments?
    \item[] Answer: \answerYes{} 
    \item[] Justification: The experiments section reports the use of 32 Nvidia A800 80G GPUs and describes the training-compute matching procedure; the appendix explains the generation, correction, verification, filtering, and update costs included in the compute budget.
    \item[] Guidelines:
    \begin{itemize}
        \item The answer \answerNA{} means that the paper does not include experiments.
        \item The paper should indicate the type of compute workers CPU or GPU, internal cluster, or cloud provider, including relevant memory and storage.
        \item The paper should provide the amount of compute required for each of the individual experimental runs as well as estimate the total compute. 
        \item The paper should disclose whether the full research project required more compute than the experiments reported in the paper (e.g., preliminary or failed experiments that didn't make it into the paper). 
    \end{itemize}
    
\item {\bf Code of ethics}
    \item[] Question: Does the research conducted in the paper conform, in every respect, with the NeurIPS Code of Ethics \url{https://neurips.cc/public/EthicsGuidelines}?
    \item[] Answer: \answerYes{} 
    \item[] Justification: The research uses public reasoning benchmarks and standard model fine-tuning/evaluation procedures, preserves anonymity, and does not involve human subjects, privacy-sensitive data, or the release of a new high-risk dataset.
    \item[] Guidelines:
    \begin{itemize}
        \item The answer \answerNA{} means that the authors have not reviewed the NeurIPS Code of Ethics.
        \item If the authors answer \answerNo, they should explain the special circumstances that require a deviation from the Code of Ethics.
        \item The authors should make sure to preserve anonymity (e.g., if there is a special consideration due to laws or regulations in their jurisdiction).
    \end{itemize}

\item {\bf Broader impacts}
    \item[] Question: Does the paper discuss both potential positive societal impacts and negative societal impacts of the work performed?
    \item[] Answer: \answerYes{} 
    \item[] Justification: The limitations and societal impact sections discuss potential positive and negative impacts. Positive impacts include improving the compute efficiency of reasoning reinforcement learning and achieving stronger performance.
    \item[] Guidelines:
    \begin{itemize}
        \item The answer \answerNA{} means that there is no societal impact of the work performed.
        \item If the authors answer \answerNA{} or \answerNo, they should explain why their work has no societal impact or why the paper does not address societal impact.
        \item Examples of negative societal impacts include potential malicious or unintended uses (e.g., disinformation, generating fake profiles, surveillance), fairness considerations (e.g., deployment of technologies that could make decisions that unfairly impact specific groups), privacy considerations, and security considerations.
        \item The conference expects that many papers will be foundational research and not tied to particular applications, let alone deployments. However, if there is a direct path to any negative applications, the authors should point it out. For example, it is legitimate to point out that an improvement in the quality of generative models could be used to generate Deepfakes for disinformation. On the other hand, it is not needed to point out that a generic algorithm for optimizing neural networks could enable people to train models that generate Deepfakes faster.
        \item The authors should consider possible harms that could arise when the technology is being used as intended and functioning correctly, harms that could arise when the technology is being used as intended but gives incorrect results, and harms following from (intentional or unintentional) misuse of the technology.
        \item If there are negative societal impacts, the authors could also discuss possible mitigation strategies (e.g., gated release of models, providing defenses in addition to attacks, mechanisms for monitoring misuse, mechanisms to monitor how a system learns from feedback over time, improving the efficiency and accessibility of ML).
    \end{itemize}
    
\item {\bf Safeguards}
    \item[] Question: Does the paper describe safeguards that have been put in place for responsible release of data or models that have a high risk for misuse (e.g., pre-trained language models, image generators, or scraped datasets)?
    \item[] Answer: \answerNA{} 
    \item[] Justification: The paper does not release a new high-misuse-risk pretrained model, image generator, or scraped dataset; the work focuses on a training method and evaluates it on existing benchmarks.
    \item[] Guidelines:
    \begin{itemize}
        \item The answer \answerNA{} means that the paper poses no such risks.
        \item Released models that have a high risk for misuse or dual-use should be released with necessary safeguards to allow for controlled use of the model, for example by requiring that users adhere to usage guidelines or restrictions to access the model or implementing safety filters. 
        \item Datasets that have been scraped from the Internet could pose safety risks. The authors should describe how they avoided releasing unsafe images.
        \item We recognize that providing effective safeguards is challenging, and many papers do not require this, but we encourage authors to take this into account and make a best faith effort.
    \end{itemize}

\item {\bf Licenses for existing assets}
    \item[] Question: Are the creators or original owners of assets (e.g., code, data, models), used in the paper, properly credited and are the license and terms of use explicitly mentioned and properly respected?
    \item[] Answer: \answerYes{} 
    \item[] Justification: Appendix~\ref{sec:appendix-assets} lists official license information for public assets including Qwen, MathArena, LiveCodeBench, and vLLM, and cites the corresponding models, datasets, and evaluation tools.
    \item[] Guidelines:
    \begin{itemize}
        \item The answer \answerNA{} means that the paper does not use existing assets.
        \item The authors should cite the original paper that produced the code package or dataset.
        \item The authors should state which version of the asset is used and, if possible, include a URL.
        \item The name of the license (e.g., CC-BY 4.0) should be included for each asset.
        \item For scraped data from a particular source (e.g., website), the copyright and terms of service of that source should be provided.
        \item If assets are released, the license, copyright information, and terms of use in the package should be provided. For popular datasets, \url{paperswithcode.com/datasets} has curated licenses for some datasets. Their licensing guide can help determine the license of a dataset.
        \item For existing datasets that are re-packaged, both the original license and the license of the derived asset (if it has changed) should be provided.
        \item If this information is not available online, the authors are encouraged to reach out to the asset's creators.
    \end{itemize}

\item {\bf New assets}
    \item[] Question: Are new assets introduced in the paper well documented and is the documentation provided alongside the assets?
    \item[] Answer: \answerNA{} 
    \item[] Justification: The paper does not introduce a new dataset or benchmark. Any future released code/checkpoints will be documented separately with release instructions.
    \item[] Guidelines:
    \begin{itemize}
        \item The answer \answerNA{} means that the paper does not release new assets.
        \item Researchers should communicate the details of the dataset\slash code\slash model as part of their submissions via structured templates. This includes details about training, license, limitations, etc. 
        \item The paper should discuss whether and how consent was obtained from people whose asset is used.
        \item At submission time, remember to anonymize your assets (if applicable). You can either create an anonymized URL or include an anonymized zip file.
    \end{itemize}

\item {\bf Crowdsourcing and research with human subjects}
    \item[] Question: For crowdsourcing experiments and research with human subjects, does the paper include the full text of instructions given to participants and screenshots, if applicable, as well as details about compensation (if any)? 
    \item[] Answer: \answerNA{} 
    \item[] Justification: The work does not involve crowdsourcing experiments or human-subject studies.
    \item[] Guidelines:
    \begin{itemize}
        \item The answer \answerNA{} means that the paper does not involve crowdsourcing nor research with human subjects.
        \item Including this information in the supplemental material is fine, but if the main contribution of the paper involves human subjects, then as much detail as possible should be included in the main paper. 
        \item According to the NeurIPS Code of Ethics, workers involved in data collection, curation, or other labor should be paid at least the minimum wage in the country of the data collector. 
    \end{itemize}

\item {\bf Institutional review board (IRB) approvals or equivalent for research with human subjects}
    \item[] Question: Does the paper describe potential risks incurred by study participants, whether such risks were disclosed to the subjects, and whether Institutional Review Board (IRB) approvals (or an equivalent approval/review based on the requirements of your country or institution) were obtained?
    \item[] Answer: \answerNA{} 
    \item[] Justification: The work does not involve crowdsourcing or human-subject research, so IRB approval is not applicable.
    \item[] Guidelines:
    \begin{itemize}
        \item The answer \answerNA{} means that the paper does not involve crowdsourcing nor research with human subjects.
        \item Depending on the country in which research is conducted, IRB approval (or equivalent) may be required for any human subjects research. If you obtained IRB approval, you should clearly state this in the paper. 
        \item We recognize that the procedures for this may vary significantly between institutions and locations, and we expect authors to adhere to the NeurIPS Code of Ethics and the guidelines for their institution. 
        \item For initial submissions, do not include any information that would break anonymity (if applicable), such as the institution conducting the review.
    \end{itemize}

\item {\bf Declaration of LLM usage}
    \item[] Question: Does the paper describe the usage of LLMs if it is an important, original, or non-standard component of the core methods in this research? Note that if the LLM is used only for writing, editing, or formatting purposes and does \emph{not} impact the core methodology, scientific rigor, or originality of the research, declaration is not required.
    \item[] Answer: \answerYes{} 
    \item[] Justification: LLMs are central to the method and experiments. The paper describes the base models, LLM rollout generation, correction generation, and verifier-based training/evaluation setup.
    \item[] Guidelines:
    \begin{itemize}
        \item The answer \answerNA{} means that the core method development in this research does not involve LLMs as any important, original, or non-standard components.
        \item Please refer to our LLM policy in the NeurIPS handbook for what should or should not be described.
    \end{itemize}

\end{enumerate}

\end{document}